\documentclass{article}
\usepackage{PRIMEarxiv}
\errorcontextlines\maxdimen

\usepackage[utf8]{inputenc}
\usepackage[T1]{fontenc}
\usepackage{amsmath,amssymb,amsthm}
\usepackage{microtype}
\usepackage[dvipsnames,table]{xcolor}
\usepackage{dsfont}
\usepackage{url}
\usepackage[numbers,sort&compress]{natbib}

\usepackage{amsmath,amsfonts,bm}

\def\eqref#1{(\ref{#1})}

\def\1{\bm{1}}

\DeclareMathAlphabet{\mathsfit}{\encodingdefault}{\sfdefault}{m}{sl}
\SetMathAlphabet{\mathsfit}{bold}{\encodingdefault}{\sfdefault}{bx}{n}

\usepackage{amsmath,amsthm,amssymb,amsxtra,mathtools,bm}
\usepackage{array}
\usepackage{graphicx}
\usepackage{algorithm}
\usepackage{algpseudocode}
\usepackage{physics}
\usepackage{MnSymbol,wasysym}
\usepackage{tikzsymbols}
\usepackage{array,tabularx,multirow}
\usepackage{makecell}
\usepackage{array}

\usepackage{array}
\newcolumntype{P}[1]{>{\centering\arraybackslash}p{#1}}

\usepackage{verbatim}
\usepackage{enumerate}

\usepackage{color}
\definecolor{clemson-orange}{RGB}{234,106,32}
\definecolor{chicago-maroon}{RGB}{128,0,0}
\definecolor{cincinnati-red}{RGB}{190,0,0}
\definecolor{soft-cyan}{RGB}{68,85,90}
\definecolor{firebrick}{RGB}{178,34,34}
\definecolor{crimson}{RGB}{220,20,60}
\definecolor{cerrulean}{rgb}{0.165,0.322,0.745}
\definecolor{jaam}{rgb}{0.45,0.0,0.45}

\usepackage{hyperref}
\hypersetup{hidelinks}

\theoremstyle{plain}
\newtheorem{theorem}{Theorem}[section]
\newtheorem{lemma}[theorem]{Lemma}

\newtheorem{assumption}{Assumption}

\newif\ifsolutions \solutionstrue

\def\final{0}
\newcommand{\reviewer}[3]{
  \expandafter\newcommand\csname #1\endcsname[1]{
    \ifthenelse{\equal{\final}{1}} {
      \textcolor{#3}{}
    } {
      \textcolor{#3}{\begin{center} \textbf{#2} ##1 \end{center}}
    }
  }
}

\newcounter{qitem}

\renewcommand{\Pr}{\mathbb{P}}

\usepackage{booktabs}
\usepackage{multirow}
\usepackage{caption}
\usepackage[section]{placeins}
\usepackage{xurl}
\usepackage{cleveref}

\hypersetup{
  colorlinks, citecolor=blue, linkcolor=red, urlcolor=green,
  pdftitle={Training-Loss Guarantees for Muon with Finite-Step Newton-Schulz Orthogonalization},
  pdfauthor={Amartya Roy, Souvik Chakraborty}
}

\newcommand{\why}[1]{&&\text{\footnotesize\normalfont[#1]}}
\newtheorem*{theorem*}{Theorem}
\newtheorem*{lemma*}{Lemma}
\newtheorem{proposition}{Proposition}

\title{Training-Loss Guarantees for Muon with Finite-Step\\
Newton--Schulz Orthogonalization}

\author{
Amartya Roy \\
The School of Interdisciplinary Research \\
Indian Institute of Technology Delhi, India \\
Robert Bosch GmbH, India \\
\texttt{srz248670@iitd.ac.in} \\
\And
Souvik Chakraborty \\
Department of Applied Mechanics \\
Indian Institute of Technology Delhi, India \\
Yardi School of Artificial Intelligence (ScAI) \\
Indian Institute of Technology Delhi, India \\
\texttt{souvik@am.iitd.ac.in}
}

\begin{document}
\maketitle

\begin{abstract}
Existing convergence analyses of Muon either assume exact orthogonalization or analyze classical Newton--Schulz polynomials, and guarantee only stationarity, so it is unresolved what Muon's five tuned Newton--Schulz steps preserve and whether that suffices to reach a prescribed neural-network training loss. We establish a finite-time training guarantee that accounts for both momentum accumulation before orthogonalization and the tuned finite-step update. For full-batch training of a sufficiently wide two-layer ReLU network with fixed random output weights and a positive-definite limiting neural tangent kernel, we prove that Muon reaches any target empirical squared loss $\varepsilon>0$ with high probability over initialization. For every momentum parameter $\mu\in[0,1)$, a target-dependent constant learning rate proportional to $(1-\mu)\sqrt{\varepsilon}$ yields a hitting-time bound of $O((1-\mu)^{-1}\varepsilon^{-1/2})$, with other problem parameters fixed. The sufficient width is independent of both target accuracy and momentum. The analysis shows that the tuned Newton--Schulz map preserves alignment with the momentum buffer while bounding the update's spectral norm. Control of gradient variation near initialization transfers this alignment to the current gradient, ensuring descent until the target is reached without requiring exact orthogonalization. Numerical experiments support these mechanisms at widths below the sufficient theoretical threshold: gradient-update alignment remains above the analytical reference, and all 30 runs across six widths and five student initializations on a fixed teacher-student dataset reach the target loss while maintaining kernel positivity.
\end{abstract}

\keywords{Muon \textperiodcentered{} Newton--Schulz orthogonalization
\textperiodcentered{} Momentum \textperiodcentered{} Neural tangent kernel
\textperiodcentered{} Training-loss guarantees \textperiodcentered{}
Optimization for deep learning.}

\section{Introduction}\label{sec:introduction}

How much can an optimizer alter the gradient and still provably fit the training data? Muon (\emph{MomentUm Orthogonalized by Newton--Schulz}) makes this question concrete: it accumulates gradients in a momentum buffer and then reshapes the buffer's singular values through five tuned Newton--Schulz iterations before updating the weights \citep{jordan2024muon}. This approximately orthogonalized update has proved effective in large-scale language-model training \citep{liu2025muon,team2025kimi}, yet the transformation complicates the connection between the direction applied to the weights and the gradient of the current loss. Momentum carries information from earlier iterates, while finite orthogonalization changes how that information enters the update. Understanding why sufficient descent survives these operations is central to establishing a training-loss guarantee. We study this question for overparameterized two-layer ReLU networks and show that Muon can reach a prescribed loss while retaining both momentum and the tuned five-step Newton--Schulz transformation.

Existing analyses explain several aspects of Muon's update, but leave a gap between its practical construction and a neural-network training-loss guarantee. The exact polar factor admits a geometric interpretation as a steepest-descent direction under a spectral-norm constraint \citep{bernstein2023old,bernstein2024modular}. Analyses incorporating momentum establish stationarity guarantees for smooth objectives, but replace the finite Newton--Schulz map with exact orthogonalization \citep{chen2025muon,shen2025convergence,sato2025muon,chang2025muon,deoliveira2026muon}; \citet{deoliveira2026muon} additionally obtain linear convergence of the objective under a Polyak--{\L}ojasiewicz condition, which is an assumption on the objective rather than a training-loss guarantee for a neural network. \citet{kim2026muonns} analyze finitely many Newton--Schulz iterations, but their guarantees concern classical polynomial iterations rather than the tuned coefficients used by Muon. Separately, overparameterized-network theory establishes conditions under which gradient descent reaches arbitrarily small training loss \citep{du2018gradient,du2019gradient,arora2019fine}. These results do not transfer directly to Muon: a positive-definite tangent kernel controls the gradient through the training residual, but the actual loss decrease depends on the gradient's alignment with the transformed momentum buffer. The missing link is therefore a quantitative descent guarantee that accommodates both accumulated gradients and tuned, approximate orthogonalization. Table~\ref{tab:related} (in the appendix) summarizes the closest results. For gradient descent on this model, \citet{du2018gradient} reach arbitrarily small training loss at a linear rate with width $\Omega(n^6/(\lambda_0^4\delta^3))$, and \citet{arora2019fine} reach loss $\varepsilon$ with width $\Omega(n^7/(\lambda_0^4\nu^2\delta^4\varepsilon^2))$ at initialization scale $\nu=O(\varepsilon\delta/\sqrt n)$. The two closest Muon analyses \citep{chen2025muon,kim2026muonns} guarantee only stationarity. Our result is a training-loss guarantee for Muon, with width $\Omega(n^5/(\lambda_0^4\delta^3))$.

We establish this link by identifying two properties of the finite Newton--Schulz map that suffice for a training-loss guarantee: alignment with its input and a bounded output spectral norm. For Muon's tuned five-step map, we prove both properties uniformly over nonzero momentum buffers, without requiring proximity to the exact polar factor. The remaining challenge is to transfer alignment from the buffer to the current gradient. We show that, within a controlled neighborhood of initialization, gradient variation is sufficiently small for this transfer to hold for every momentum parameter $\mu\in[0,1)$, with a learning rate scaled by $1-\mu$. A uniform neural tangent kernel argument controls the ReLU linearization error and ensures that the iterates remain in this neighborhood until the target loss is reached. Together, these estimates yield an explicit finite-time guarantee for full-batch training of a two-layer ReLU network with fixed random output weights, summarized below.

\begin{theorem*}[Informal statement]
Consider full-batch Muon with momentum $\mu\in[0,1)$ and five tuned Newton--Schulz iterations, training the first layer of a width-$m$ two-layer ReLU network with fixed random output weights. Let the $n$ training inputs have unit norm, and suppose their limiting neural tangent kernel has smallest eigenvalue $\lambda_0>0$. Treat the output-weight scale and the average squared label magnitude as constants. For any target loss $\varepsilon>0$ and failure probability $\delta\in(0,1)$, take
\[
m=\Omega\!\left(\frac{n^5}{\lambda_0^4\delta^3}\right),
\qquad
\eta=\Theta\!\left(\frac{(1-\mu)\sqrt{\lambda_0\varepsilon}}{n}\right).
\]
Then, with probability at least $1-\delta$ over initialization, an iterate with empirical squared loss at most $\varepsilon$ is reached within
\[
T=\mathcal O\!\left(
\frac{n^{3/2}}{(1-\mu)\lambda_0\sqrt{\delta\varepsilon}}
\right)
\]
steps. The precise assumptions and constants are given in Theorem~\ref{thm:main_muon}.
\end{theorem*}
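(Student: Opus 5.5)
The plan is to combine the two Newton--Schulz properties emphasized in the introduction with a Du et al.-style NTK stability argument, and to replace exponential decay by a hitting-time argument. Write $\Phi$ for the tuned five-step Newton--Schulz map, $M_t=\mu M_{t-1}+\nabla L(W_t)$ for the buffer, and $W_{t+1}=W_t-\eta\,\Phi(M_t)$.

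\textbf{Step 1: properties of $\Phi$.} Since $\Phi$ acts spectrally, $\Phi(M)=U\,p(\Sigma/\|M\|_F)V^\top$ for a fixed odd quintic composed five times. Checking $p$ on $[0,1]$ should give two constants $c_1,c_2>0$ such that, uniformly over nonzero $M$,
\[
\langle M,\Phi(M)\rangle\ge c_1\|M\|_*\cdot\tfrac{1}{\sqrt{\operatorname{rank}}}\quad(\text{or }\ge c_1\|M\|_F)
\qquad\text{and}\qquad
\|\Phi(M)\|_2\le c_2 .
\]
I would use a scale-free alignment form, namely $\langle M,\Phi(M)\rangle\ge \alpha\|M\|_F$ with $\|\Phi(M)\|_F\le c_2\sqrt{d}$. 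This is a one-variable polynomial verification, since $p(s)\ge a s$ near $0$ and $p(s)\ge b>0$ away from $0$.

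\textbf{Step 2: the neighborhood.} Let $R$ be the radius in which each neuron's weight may move. In that ball, standard anti-concentration of ReLU activation patterns gives two facts with probability $1-\delta$ once $m\gtrsim n^5/(\lambda_0^4\delta^3)$. First, $\lambda_{\min}(H(W))\ge\lambda_0/2$, which yields $\|\nabla L(W)\|_F^2\ge c\lambda_0 L(W)$. Second, the gradient is Lipschitz-like in the form $\|\nabla L(W)-\nabla L(W')\|_F\le \kappa\|W-W'\|_F+(\text{activation-flip term})$. The bounded spectral norm of $\Phi$ bounds the per-step movement by $\eta c_2$ in operator norm. As long as $t\le T$ with $\eta T$ small, the iterates stay in the ball, and this fixes $R\sim\eta T\sim n^{1/2}/(\sqrt{\lambda_0\delta}\cdot\ldots)$. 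The requirement that this $R$ be independent of $\varepsilon$ is exactly why $\eta\propto\sqrt\varepsilon$ while $T\propto\varepsilon^{-1/2}$.

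\textbf{Step 3: transfer of alignment and descent.} Write $M_t=\sum_{k}\mu^{t-k}\nabla L(W_k)$. Since $\|W_t-W_k\|\le (t-k)\eta c_2$, the variation bound gives
\[
\Bigl\|M_t-\tfrac{1-\mu^{t+1}}{1-\mu}\nabla L(W_t)\Bigr\|_F\lesssim \frac{\kappa\eta}{(1-\mu)^2}.
\]
Hence $\langle\nabla L(W_t),\Phi(M_t)\rangle\ge \alpha\|\nabla L(W_t)\|_F-O\bigl(\kappa\eta/(1-\mu)\bigr)$. Combining this with the smoothness/linearization bound on $L$ along the step, whose quadratic term is $O(n\eta^2)$ because $\|\Phi\|_2\le c_2$, gives
\[
L(W_{t+1})\le L(W_t)-\eta\alpha\sqrt{c\lambda_0 L(W_t)}+O\Bigl(\frac{\kappa\eta^2}{1-\mu}+n\eta^2\Bigr).
\]
While $L(W_t)>\varepsilon$, the choice $\eta=\Theta((1-\mu)\sqrt{\lambda_0\varepsilon}/n)$ makes the error at most half of the descent term. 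So $L$ decreases by at least $\tfrac12\eta\alpha\sqrt{c\lambda_0\varepsilon}$ per step. Since $L(W_0)=O(n/\delta)$ by Markov, the hitting time is $T\lesssim L(W_0)/(\eta\sqrt{\lambda_0\varepsilon})$, which gives the stated $n^{3/2}/((1-\mu)\lambda_0\sqrt{\delta\varepsilon})$ after tracking the normalization. An induction on $t$ closes the argument by maintaining membership in the ball.

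\textbf{Main obstacle.} The main difficulty is the gradient-variation bound for ReLU, since the gradient is not Lipschitz. I would first bound the number of neurons whose activation pattern flips within radius $R$ by $O(mR\sqrt{n/\delta})$, using anti-concentration of $w_r^\top x_i$. I would then check that the resulting additive variation term is also $O(\eta/(1-\mu))$-small relative to $\sqrt{\lambda_0\varepsilon}$. This is where the width exponent $n^5/(\lambda_0^4\delta^3)$ must be verified to be independent of $\varepsilon$ and $\mu$.
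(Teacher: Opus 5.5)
There is a genuine gap in Step 2, and it sits at the core of the argument. The per-neuron radius you get from $\|\Phi(M_t)\|_2\le c_2$ is $R\sim c_2\eta T$. With the prescribed $\eta$ and $T$, $\eta T\asymp R_0/(\kappa\alpha\sqrt{\lambda_0})\gtrsim\sqrt{n/(\delta\lambda_0)}$, exactly as your own estimate shows. This is not small, and it contains no factor $m^{-1/2}$. A Muon step has order-one spectral norm at every width. A gradient step, by contrast, has columns of size $m^{-1/2}$, and that is what makes the per-neuron radius small in Du et al. Anti-concentration over a per-neuron radius $R$ bounds the number of flipped neuron--input pairs, divided by $m$, only by order $nR/\delta$. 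The kernel lower bound needs this to be $O(\lambda_0)$, i.e.\ $R\lesssim\delta\lambda_0/n$. With $R\gtrsim1$, essentially every gate may flip, and no width helps.

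The missing idea is the paper's counting argument in a \emph{Frobenius} ball. Every gradient, hence every buffer, has column space in $\operatorname{span}\{x_1,\dots,x_n\}$, so $\operatorname{rank}(B_t)\le n$ and $\|O_t\|_F\le M\sqrt n$; your bound $c_2\sqrt d$ would put $d$ into the width. Hence all iterates up to $T$ lie, without any induction on descent, in a ball of Frobenius radius $R_{\mathbb B}\asymp\sqrt n\,R_0/\sqrt{\lambda_0}$, which is independent of $m$, $\mu$ and $\varepsilon$. At most $R_{\mathbb B}^2/\tau^2$ neurons can move by more than $\tau$, an $m$-independent number. On a good initialization event, the number of pairs with $|w_r(0)^\top x_i|\le\tau$, divided by $m$, is $O(n\tau/\delta)$. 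The normalized flip count is therefore $O(n\tau/\delta+nR_{\mathbb B}^2/(m\tau^2))$. Taking $\tau\asymp\delta\lambda_0/n$ makes it $O(\lambda_0)$ uniformly over the ball exactly when $m\gtrsim n^5/(\lambda_0^4\delta^3)$. This is where the width comes from, and it yields the kernel bound together with the Jacobian-action and linearization-remainder bounds in spectral norm.

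Two further steps also need changing. First, the nonsmooth ReLU contributions are not $O(\eta)$. The flip part $(J_{s-1}-J_{t-1})^\top e_{t-1}$ of $G_s-G_t$ does not shrink with $\|W_{s-1}-W_{t-1}\|$, because margins are available only at initialization. It is instead bounded uniformly by a small constant times $\alpha\sqrt{\lambda_0}\|e_{t-1}\|_2$ and absorbed as a constant-factor loss of alignment, from $\kappa$ to $\kappa/2$. Only the residual-change part, of size $jnM\alpha^2\eta$, is controlled by the factor $1-\mu$ in $\eta$, via $\sum_j j\mu^j\le S\mu/(1-\mu)$. Likewise, the linearization remainder in your descent step is first order, $O(\eta\sqrt{\lambda_0}\|e_{t-1}\|_2)$, not $O(\eta^2)$.

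Second, your final count does not give the stated $T$. Lower-bounding the per-step decrease of $L$ by $\eta\sqrt{\lambda_0\varepsilon}$ gives $T\lesssim L(W_0)/(\eta\sqrt{\lambda_0\varepsilon})\asymp n^2/((1-\mu)\lambda_0\delta\varepsilon)$. Then $\eta T\propto\varepsilon^{-1/2}$, which would make the ball, and hence the width, depend on $\varepsilon$, contradicting your own Step 2. Track the residual norm instead. From $\|e_t\|_2^2\le\|e_{t-1}\|_2^2-c\,\eta\sqrt{\lambda_0}\|e_{t-1}\|_2$ and $\sqrt{1-z}\le1-z/2$, $\|e_t\|_2$ drops by a fixed amount of order $\eta\sqrt{\lambda_0}$ per step. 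This gives $T\lesssim R_0/(\eta\sqrt{\lambda_0})$, which is the claimed $n^{3/2}/((1-\mu)\lambda_0\sqrt{\delta\varepsilon})$ and keeps $\eta T$ free of $\varepsilon$.
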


The sufficient width is independent of both the target accuracy
$\varepsilon$ and momentum $\mu$: a smaller target or larger
momentum changes the prescribed learning rate and iteration
budget without requiring a wider network. The result is a
hitting-time guarantee with a target-dependent constant learning
rate; it does not imply acceleration over gradient descent or
a benefit from increasing momentum. By contrast, \citet{wang2021modular} prove that heavy-ball momentum accelerates gradient descent for training a wide ReLU network with one hidden layer; our bound scales with $(1-\mu)^{-1}$ and shows no such gain. Our experiments directly
examine the alignment and stability mechanisms underlying the
proof. Across the tested momentum values, gradient-update
alignment remains above the analytical reference. All 30 runs
across six widths and five student initializations on a fixed
dataset and teacher reach the target loss while maintaining
kernel positivity, and kernel variation decreases with width.
Comparisons on shared momentum buffers further quantify the
alignment and norm properties of tuned Newton--Schulz updates
relative to classical iterations and the exact polar factor.
These findings support the relevance of the proof's mechanisms
at widths many orders of magnitude below the sufficient theoretical
threshold (Section~\ref{sec:experiments}), without establishing the necessity or tightness
of the bounds.
\section{Problem Setup and the Muon Update}\label{sec:model}

For vectors, $\|\cdot\|_2$ denotes the Euclidean norm; for matrices,
$\|\cdot\|_2$, $\|\cdot\|_F$, and $\|\cdot\|_*$ denote the spectral,
Frobenius, and nuclear norms, respectively. We use the Frobenius
inner product $\langle A,B\rangle=\operatorname{tr}(A^\top B)$
and column-wise vectorization.

\subsection{Network and Training Objective}

Let $\{(x_i,y_i)\}_{i=1}^n\subset\mathbb R^d\times\mathbb R$
be fixed training data with $\|x_i\|_2=1$. We consider the
width-$m$ two-layer ReLU network
\begin{equation}
    f(x;W)=\frac{1}{\sqrt m}\sum_{r=1}^m
    a_r\sigma(w_r^\top x),
    \qquad
    W=(w_1,\ldots,w_m)\in\mathbb R^{d\times m},
    \qquad
    \sigma(z)=\max\{0,z\}.
    \label{eq:muon_network}
\end{equation}
Only the first-layer weights $W$ are trained. For a fixed
$\alpha>0$, the output weights are sampled independently as
$a_r\sim\operatorname{Unif}\{\pm\alpha\}$ and held fixed.
The first-layer weights are initialized independently as
$w_r(0)\sim\mathcal N(0,I_d)$, independently of the output
weights. All probability statements concern this random
initialization, conditional on the training data.

Write $y=(y_1,\ldots,y_n)^\top$, let
$u(W)=(f(x_1;W),\ldots,f(x_n;W))^\top$, and define the
residual $e(W)=y-u(W)$. The training objective is the
unnormalized empirical squared loss
\begin{equation}
    L(W)=\frac12\|u(W)-y\|_2^2
        =\frac12\|e(W)\|_2^2.
    \label{eq:muon_loss}
\end{equation}
We adopt the derivative convention
$\sigma'(z)=\mathds 1\{z\ge0\}$, including at $z=0$.
The resulting selected gradient
$G(W)\in\mathbb R^{d\times m}$ has columns
\begin{equation}
    [G(W)]_{\cdot,r}
    =-\frac{a_r}{\sqrt m}\sum_{i=1}^n
    e(W)_i\,\mathds 1\{w_r^\top x_i\ge0\}\,x_i.
    \label{eq:muon_gradient}
\end{equation}
Our objective is to bound the number of full-batch Muon
updates required to reach $L(W)\le\varepsilon$ for a
prescribed $\varepsilon>0$.

\subsection{Momentum and the Newton--Schulz Update}

Muon constructs each update by applying a finite Newton--Schulz
map to a momentum buffer. For a nonzero matrix
$B\in\mathbb R^{d\times m}$, initialize $X_0=B/\|B\|_F$
and perform
\begin{equation}
    X_{k+1}=p_k(X_kX_k^\top)X_k,
    \qquad k=0,\ldots,K-1,
    \label{eq:ns_iteration}
\end{equation}
where $p_k$ is a polynomial. Define
$\mathrm{NS}_K(B)=X_K$ and $\mathrm{NS}_K(0)=0$.
Muon uses $K=5$ with the same tuned polynomial at every
step \citep{jordan2024muon}:
\begin{equation}
    p_k(s)=3.4445-4.7750s+2.0315s^2,
    \qquad k=0,\ldots,4.
    \label{eq:muon_tuned_polynomial}
\end{equation}
This map transforms the singular values of the normalized
buffer while retaining its singular-vector directions.
It does not, in general, return the exact polar factor.
We formulate the analysis for general polynomial schedules
and specialize the resulting bounds to Muon's tuned map.

Algorithm~\ref{alg:muon} specifies the full-batch trajectory.
Momentum is accumulated \emph{before} the Newton--Schulz
transformation; orthogonalizing individual gradients and
then averaging them would define a different update.

\begin{algorithm}[t]
\caption{Full-batch Muon}\label{alg:muon}
\begin{algorithmic}[1]
\Require Initial weights $W_0$, learning rate $\eta>0$,
momentum $\mu\in[0,1)$, target loss $\varepsilon>0$,
iteration budget $T$
\State $B_0\gets0$
\For{$t=1,\ldots,T$}
    \If{$L(W_{t-1})\le\varepsilon$}
        \State \Return $W_{t-1}$
    \EndIf
    \State $G_t\gets G(W_{t-1})$
    \State $B_t\gets\mu B_{t-1}+G_t$
    \State $O_t\gets\mathrm{NS}_5(B_t)$
    \State $W_t\gets W_{t-1}-\eta O_t$
\EndFor
\State \Return $W_T$
\end{algorithmic}
\end{algorithm}

We write $u_t=u(W_t)$ and $e_t=e(W_t)$. Reaching the
target loss is equivalent to $\|e_t\|_2\le\sqrt{2\varepsilon}$.
The analysis concerns the exact-arithmetic Newton--Schulz
map; finite-precision effects and denominator regularization
used in numerical implementations are outside its scope.

\subsection{Kernel and Update Conditions}

The network Jacobian $J(W)\in\mathbb R^{n\times dm}$,
defined using the derivative convention above, has blocks
\begin{equation}
    [J(W)]_{i,(r-1)d+1:rd}
    =\frac{a_r}{\sqrt m}
    \mathds 1\{w_r^\top x_i\ge0\}\,x_i^\top.
    \label{eq:muon_jacobian}
\end{equation}
Consequently,
$\operatorname{vec}(G(W))=-J(W)^\top e(W)$.
Define the empirical neural tangent kernel as
$H(W)=J(W)J(W)^\top$. Under the specified initialization,
its expectation is $\mathbb E[H(W_0)]=\alpha^2H^\infty$,
where
\begin{equation}
    H^\infty_{ij}
    =x_i^\top x_j\,
    \frac{\pi-\arccos(x_i^\top x_j)}{2\pi}.
    \label{eq:muon_hinf}
\end{equation}
We assume that this normalized limiting kernel is
positive definite:
\begin{equation}
    \lambda_0:=\lambda_{\min}(H^\infty)>0.
    \label{eq:muon_assumptions}
\end{equation}
Since $H^\infty_{ii}=1/2$, necessarily
$\lambda_0\le1/2$. Kernel positivity relates the gradient
magnitude to the residual through
$\|G(W)\|_F^2=e(W)^\top H(W)e(W)$.
For Muon, turning this relation into loss decrease also
requires control of the transformed momentum update.

To describe that transformation, define
$\phi_k(x)=x\,p_k(x^2)$. The Newton--Schulz recurrence
acts on the normalized singular values through
\begin{equation}
    \Phi=\phi_{K-1}\circ\cdots\circ\phi_0.
    \label{eq:ns_coefficients}
\end{equation}
In particular, if $B=U\Sigma V^\top$ is a compact
singular value decomposition of a nonzero buffer, then
\[
    \mathrm{NS}_K(B)
    =U\,\Phi\!\left(\frac{\Sigma}{\|B\|_F}\right)V^\top,
\]
where $\Phi$ acts entrywise on the diagonal.

\begin{assumption}[Newton--Schulz schedule]\label{ass:ns}
There exist constants $\kappa>0$ and $M>0$ such that
\begin{equation}
    \kappa x\le\Phi(x)\le M
    \qquad\text{for all }x\in[0,1].
    \label{eq:ns_constants}
\end{equation}
\end{assumption}

Taking $x=1$ in \eqref{eq:ns_constants} gives $\kappa\le\Phi(1)\le M$;
we use $\kappa\le M$ throughout.

The constant $\kappa$ controls alignment with the input
buffer, while $M$ bounds the update's spectral norm.
Indeed, for every nonzero $B$,
\[
    \langle B,\mathrm{NS}_K(B)\rangle
    \ge\kappa\|B\|_F,
    \qquad
    \|\mathrm{NS}_K(B)\|_2\le M.
\]
These conditions do not require the transformed singular
values to be close to one. Proposition~\ref{prop:jordan_constants}
establishes their validity for Muon's tuned five-step map.
The next section uses these bounds together with kernel
stability to transfer alignment from the momentum buffer
to the current gradient and establish the training guarantee.
\section{Finite-Time Training Guarantee}
\label{sec:main_result}

We establish a training-loss guarantee for any
Newton--Schulz schedule satisfying
Assumption~\ref{ass:ns}, including Muon's tuned
five-step map. Starting from $B_0=0$, consider
\begin{equation}
    G_t=G(W_{t-1}),\quad
    B_t=\mu B_{t-1}+G_t,\quad
    O_t=\mathrm{NS}_K(B_t),\quad
    W_t=W_{t-1}-\eta O_t.
    \label{eq:muon_trajectory}
\end{equation}
For $\delta\in(0,1)$, define the initial-residual bound
\begin{equation}
    R_0:=
    \sqrt{\frac{3(\|y\|_2^2+n\alpha^2/2)}{\delta}}.
    \label{eq:muon_constants}
\end{equation}

\begin{theorem}[Finite-time training guarantee]
\label{thm:main_muon}
Consider the model and initialization of
Section~\ref{sec:model}, and suppose
Assumption~\ref{ass:ns} holds with constants
$\kappa$ and $M$. Fix $\varepsilon>0$,
$\delta\in(0,1)$, and $\mu\in[0,1)$. If
\begin{equation}
    m\ge
    5\cdot10^{12}
    \left(\frac{M}{\kappa}\right)^8
    \frac{n^4}{\lambda_0^4\delta^3}
    \left(n+\frac{\|y\|_2^2}{\alpha^2}\right),
    \label{eq:muon_width}
\end{equation}
run \eqref{eq:muon_trajectory} with
\begin{align}
    \eta
    &=(1-\mu)
    \frac{\kappa\sqrt{\lambda_0\varepsilon}}
         {8M^2n\alpha},
    \label{eq:muon_momentum_condition}\\
    T
    &=\left\lceil
    \frac{128M^2nR_0}
         {(1-\mu)\kappa^2\lambda_0\sqrt{\varepsilon}}
    \right\rceil.
    \label{eq:muon_budget}
\end{align}
Then, with probability at least $1-\delta$ over
initialization,
\begin{equation}
    \min_{0\le t\le T}L(W_t)\le\varepsilon.
    \label{eq:muon_hitting_conclusion}
\end{equation}
Thus, stopping at the first iterate with loss at
most $\varepsilon$ requires no more than $T$ updates.
\end{theorem}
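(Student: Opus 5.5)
The proof is a stopping-time argument. Let $\tau$ be the first index with $L(W_\tau)\le\varepsilon$; if $\tau\le T$ there is nothing to prove. So assume $L(W_t)>\varepsilon$, i.e.\ $\|e_t\|_2>\sqrt{2\varepsilon}$, for all $t<T$, and derive a contradiction. Four ingredients are needed.

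\textbf{Initialization events.} These hold jointly with probability at least $1-\delta$.
\begin{enumerate}
\item[(i)] $\|e_0\|_2\le R_0$. This follows from Markov's inequality applied to $\mathbb E\|e_0\|_2^2\le 2\|y\|_2^2+2\mathbb E\|u_0\|_2^2$, using $\mathbb E f(x_i;W_0)^2=\alpha^2/2$. The factor $3$ in $R_0$ absorbs the union-bound split of $\delta$.
\item[(ii)] $\lambda_{\min}(H(W_0))\ge\tfrac34\alpha^2\lambda_0$. This follows from an entrywise Chebyshev/Hoeffding bound, $\|H(W_0)-\alpha^2H^\infty\|_F\lesssim\alpha^2 n/\sqrt{m\delta}$.
\item[(iii)] A uniform stability event in the style of Du et al.: for every $W$ with $\max_r\|w_r-w_r(0)\|_2\le \rho$, the number of sign flips $|\{r:\exists i,\ |w_r(0)^\top x_i|\le\rho\}|$ is at most $O(nm\rho/\delta)$. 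Consequently $\|H(W)-H(W_0)\|_2\lesssim \alpha^2 n^2\rho/\delta$, and
\[
\|J(W)-J(W')\|_F^2\lesssim \alpha^2 n^2\rho/\delta \quad\text{for all such } W,W'.
\]
\end{enumerate}
We then choose the radius $\rho$ so that (iii) keeps $\lambda_{\min}(H(W))\ge\alpha^2\lambda_0/2$ in the ball and keeps the Jacobian variation below a small multiple of $\kappa\sqrt{\lambda_0}/M$. The width condition \eqref{eq:muon_width} is exactly what guarantees the trajectory stays inside this ball for $T$ steps.

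\textbf{Trajectory containment.} Since $\|O_t\|_2\le M$ and $O_t$ has rank at most $n$, each column satisfies $\|w_r(t)-w_r(t-1)\|_2\le \eta M$. This crude bound gives drift $\eta M T$, which is too large. Instead I bound movement by a loss-weighted sum. Alignment gives $\|O_t\|_F\le M\sqrt n$, and the descent step below yields $\sum_t\eta\|e_{t-1}\|_2\lesssim R_0^2/(\kappa\alpha\sqrt{\lambda_0})$. Combined with the column bound $\|[O_t]_{\cdot,r}\|_2\le\|B_t\|$-weighted contributions from $G$ columns of size $\alpha\sqrt n\|e\|/\sqrt m$, this should give a per-neuron displacement $\rho\lesssim \frac{M}{\kappa}\frac{\sqrt n R_0}{\sqrt m\sqrt{\lambda_0}}$ in the $1/\sqrt m$ regime, or Frobenius control otherwise. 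Plugging $R_0^2\sim(\|y\|^2+n\alpha^2)/\delta$ into the requirement $n^2\rho/\delta\lesssim(\kappa/M)^2\lambda_0$ reproduces the scaling $m\gtrsim (M/\kappa)^{8} n^4(n+\|y\|^2/\alpha^2)/(\lambda_0^4\delta^3)$. The whole argument is an induction on $t$: stay in the ball, so the kernel bounds hold, so descent holds, so the movement stays bounded.

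\textbf{One-step descent.} Write the loss expansion along the segment as
\[
L(W_t)\le L(W_{t-1})-\eta\langle G_t,O_t\rangle+\eta\,\|O_t\|_F\,\|e\|\,\|J(\xi)-J(W_{t-1})\|+\tfrac12\eta^2\|J\,\mathrm{vec}(O_t)\|^2 .
\]
Here $\|J\operatorname{vec}O\|_2\le \alpha\sqrt{n}\,\|O\|_F\cdot$const. The ReLU error term is controlled by event (iii), using $\|O_t\|_F\le M\sqrt{\operatorname{rank}}$.

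The key step, and I expect the main obstacle, is the alignment transfer from $B_t$ to $G_t$. Write $B_t=\sum_{s\le t}\mu^{t-s}G_s$ and $G_s=G_t+\Delta_s$, where
\[
\Delta_s=-J_{s-1}^\top e_{s-1}+J_{t-1}^\top e_{t-1}.
\]
Then
\[
\langle G_t,O_t\rangle=\frac{1-\mu}{1-\mu^{t}}\Big\langle B_t-\sum_s\mu^{t-s}\Delta_s,\ O_t\Big\rangle\ \ge\ (1-\mu)\Big(\kappa\|B_t\|_F-\|O_t\|_*\textstyle\sum_s\mu^{t-s}\|\Delta_s\|_2\Big).
\]
Each $\|\Delta_s\|$ is bounded by the Jacobian variation (small by (iii)) plus $\|J\|\,\|e_{s-1}-e_{t-1}\|$. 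The residual difference is at most $(t-s)\eta\alpha\sqrt n M\sqrt n$ per step. The geometric sum $\sum_s\mu^{t-s}(t-s)\le(1-\mu)^{-2}$ is then neutralized by $\eta\propto(1-\mu)$ together with $\|B_t\|\ge\|G_t\|-\sum_s\mu^{t-s}\|\Delta_s\|$, and $\|G_t\|_F\ge\alpha\sqrt{\lambda_0/2}\,\|e_{t-1}\|$. The choice of $\eta$ in \eqref{eq:muon_momentum_condition}, with the $\sqrt{\varepsilon}$ factor, ensures that the drift term $\eta Mn\alpha$ is at most $\tfrac{\kappa}{8M}\alpha\sqrt{\lambda_0}\sqrt{2\varepsilon}\le \tfrac{\kappa}{8M}\|G_t\|$-scale, precisely because $\|e\|>\sqrt{2\varepsilon}$ before stopping. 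Together these give
\[
\langle G_t,O_t\rangle\ge c\,\kappa\,\alpha\sqrt{\lambda_0}\,\|e_{t-1}\|_2 ,
\]
and, after absorbing the quadratic term by the same $\eta$ choice,
\[
L(W_t)\le L(W_{t-1})-\tfrac{\kappa}{4}\eta\alpha\sqrt{\lambda_0}\,\|e_{t-1}\|.
\]

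\textbf{Conclusion.} Using $\|e\|>\sqrt{2\varepsilon}$, each step decreases $L$ by at least $\tfrac{\kappa}{4}\eta\alpha\sqrt{2\lambda_0\varepsilon}$. Summing over $T$ steps,
\[
T\,\frac{\kappa}{4}\eta\alpha\sqrt{2\lambda_0\varepsilon}\le L(W_0)\le R_0^2/2 .
\]
Alternatively, dividing by $\|e\|$ gives a decrease of $\|e\|$ by a constant per step, which is linear in $R_0$ as in \eqref{eq:muon_budget}. With the stated $\eta$ this contradicts the definition of $T$, hence $\tau\le T$.
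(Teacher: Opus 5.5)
Your outline has the same skeleton as the paper: an initialization event, uniform stability near $W_0$, the split $B_t=S\,G_t+D$ with $D$ bounded by gradient variation, a fixed decrease of $\|e_t\|_2$ per step, and a contradiction after $T$ steps. However, the trajectory-containment step has a genuine gap.

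You set up stability on a per-neuron ball $\max_r\|w_r-w_r(0)\|_2\le\rho$ with $\rho\to0$ as $m\to\infty$. You then try to reach it through a loss-weighted movement bound and a column bound on $O_t$. Neither is available for Muon. The step does not shrink with the loss: Assumption~\ref{ass:ns} with Cauchy--Schwarz gives $\|O_t\|_F\ge\kappa$ whenever $B_t\ne0$. So the movement $\eta\|O_t\|_F$ is not proportional to $\|e_{t-1}\|_2$, and $\sum_t\eta\|e_{t-1}\|_2$ does not bound the displacement. Also, $\mathrm{NS}_K$ acts on singular values, not column by column. Assumption~\ref{ass:ns} therefore gives no bound on individual columns of $O_t$ in terms of columns of the $G_s$; any such bound would bring in polynomial constants beyond $\kappa$ and $M$. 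Even taking your $\rho$ and your requirement $n^2\rho/\delta\lesssim(\kappa/M)^2\lambda_0$ at face value, they give a width of order $(M/\kappa)^6n^5(n+\|y\|_2^2/\alpha^2)/(\lambda_0^3\delta^3)$, not the stated one.

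The missing idea is that containment needs neither descent nor induction. Since $\eta\propto(1-\mu)\sqrt\varepsilon$ and $T\propto1/((1-\mu)\sqrt\varepsilon)$, you have $T\eta\le17R_0/(\kappa\alpha\sqrt{\lambda_0})$. The crude bound $\|W_t-W_0\|_F\le t\eta M\sqrt{n}$ then already places $W_0,\dots,W_T$ in a Frobenius ball of radius $18M\sqrt{n}\,R_0/(\kappa\alpha\sqrt{\lambda_0})$, independent of $m$, $\varepsilon$ and $\mu$. The stability bounds are proved uniformly on this Frobenius ball by counting gate flips. At most $(\text{radius}/\beta)^2$ neurons move farther than a threshold $\beta$. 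Any other neuron can flip on $x_i$ only if $|w_r(0)^\top x_i|\le\beta$, which is controlled at initialization by event $\mathcal E_3$. The width makes these counts a small fraction of $m$; it is not what keeps the trajectory in the ball.

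Your descent and alignment estimates also measure $O_t$ in norms that lose powers of $n$, which the stated $\eta$ and width cannot absorb. You bound $\|J\operatorname{vec}(O_t)\|_2\le\alpha\sqrt{n}\|O_t\|_F\le\alpha nM$, the ReLU error by $\|O_t\|_F$ times the Jacobian variation, and $|\langle D,O_t\rangle|$ by $\|O_t\|_*\|D\|_2$ with $\|O_t\|_*\le Mn$. Each costs a factor between $\sqrt n$ and $n$. For example, at the prescribed $\eta$ and near the target, your quadratic-term bound $\tfrac12\eta^2\alpha^2n^2M^2$ is $(1-\mu)n/8$ times the guaranteed first-order gain $\tfrac{\kappa}{2}\eta\alpha\sqrt{\lambda_0\varepsilon}$.

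The paper controls everything through the spectral norm, using $\|O^\top x_i\|_2\le\|O\|_2$ for unit inputs. This gives $\|u(W_t)-u(W_{t-1})\|_2\le\sqrt{n}\,M\alpha\eta$ and $\|J(W)\operatorname{vec}(O)\|_2\le\alpha\sqrt{n}\|O\|_2$. It also gives $\|(J(W)-J(V))\operatorname{vec}(O)\|_2\le\frac{\kappa\alpha\sqrt{\lambda_0}}{18M}\|O\|_2$ and a linearization remainder proportional to $\|V-W\|_2=\eta\|O_t\|_2\le\eta M$. This is why the ball is Frobenius while the remainder bounds are in operator norm.

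Smaller points:
\begin{itemize}
\item The alignment transfer needs $\|B_t\|_F\ge S\|G_t\|_F-\|D\|_F$ and the relative sum $\sum_{j=1}^{t-1}j\mu^j\le S\mu/(1-\mu)$. With your prefactor $1-\mu$ and $\|B_t\|\ge\|G_t\|-\cdots$, you only get $(1-\mu)\kappa\|G_t\|_F$, which spoils the $(1-\mu)^{-1}$ budget.
\item The Markov bound for $\|e_0\|_2$ needs the vanishing cross term $\mathbb E\,u(W_0)=0$. Your bound $2\|y\|_2^2+2\mathbb E\|u_0\|_2^2$ only gives failure probability $2\delta/3$.
\item Summing the decrease of $L$ gives an $\varepsilon^{-1}$ budget. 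Only your residual-norm alternative, via $\sqrt{1-z}\le1-z/2$, matches $T$.
\end{itemize}
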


For Muon's tuned coefficients,
Proposition~\ref{prop:jordan_constants} permits
$\kappa=2/3$ and $M=5/4$. The sufficient width is
independent of both $\varepsilon$ and $\mu$,
whereas the learning rate and iteration budget
depend explicitly on them. The result is a
target-dependent hitting-time guarantee; it does
not assert convergence to zero under one fixed
learning rate. The proof is given in
Section~\ref{sec:main_proof}.

\medskip\noindent\textbf{Scope: what is specific to Muon.}
The theorem uses the Newton--Schulz map only through $\kappa$ and $M$.
It therefore also covers $K=0$, where $\mathrm{NS}_0(B)=B/\|B\|_F$ and
$\Phi(x)=x$: Frobenius-normalized momentum gradient descent satisfies
Assumption~\ref{ass:ns} with $\kappa=M=1$. What is specific to Muon is that
its tuned five-step map satisfies the assumption, with $\kappa=2/3$ and
$M=5/4$, without its transformed singular values being close to one, that is,
without approximating the polar factor.

\medskip\noindent\textbf{Why $\varepsilon^{-1/2}$.}
Since $\langle B_t,O_t\rangle\le\|B_t\|_F\|O_t\|_F$, the alignment bound gives
$\|O_t\|_F\ge\kappa$ whenever $B_t\ne0$: the update does not shrink with the
gradient. With a fixed $\eta$, the descent inequality
\eqref{eq:muon_squared_descent} then certifies progress only down to a residual
of order $\eta$, because of its term $nM^2\alpha^2\eta^2$. Reaching loss
$\varepsilon$ therefore requires $\eta\propto\sqrt\varepsilon$, and the
iteration count scales as $1/\eta$. Gradient descent differs because its steps
shrink with the gradient, which gives a linear rate with a constant step size
\citep{du2018gradient}.

\subsection{Stability Near Initialization}

The analysis controls the trajectory within
\begin{equation}
    \mathbb B:=
    \left\{
    W:\|W-W_0\|_F
    \le
    \frac{18M\sqrt n\,R_0}
         {\kappa\alpha\sqrt{\lambda_0}}
    \right\}.
    \label{eq:muon_radius_margin}
\end{equation}
Under \eqref{eq:muon_width},
Lemma~\ref{lem:muon_initialization} establishes an
event $\mathcal E$ of probability at least
$1-\delta$ controlling the initial residual,
kernel concentration, and the number of
neuron--input pairs near an activation boundary.
On this event, Lemma~\ref{lem:muon_ball} gives,
uniformly for $W,V\in\mathbb B$,
\[
    \lambda_{\min}(H(W))
    \ge\frac{\alpha^2\lambda_0}{2},
    \qquad
    \|u(V)-u(W)-J(W)\operatorname{vec}(V-W)\|_2
    \le
    \frac{\kappa\alpha\sqrt{\lambda_0}}{18M}
    \|V-W\|_2.
\]
The full statements and proofs appear in
Appendices~\ref{app:init} and~\ref{app:ball}.
Kernel positivity relates the gradient magnitude
to the residual, while the spectral-norm remainder
bound controls the prediction error produced by
an update. The trajectory containment argument
uses the update norm and total iteration budget,
without first assuming loss decrease.

\subsection{From Momentum Alignment to Descent}

By \eqref{eq:muon_gradient}, every gradient and
momentum buffer has its column space in
$\operatorname{span}\{x_1,\ldots,x_n\}$.
The Newton--Schulz conditions therefore yield
\begin{equation}
    \langle B_t,O_t\rangle\ge\kappa\|B_t\|_F,
    \qquad
    \|O_t\|_2\le M,
    \qquad
    \|O_t\|_F\le M\sqrt n.
    \label{eq:ns5_matrix_bounds}
\end{equation}
The Frobenius bound follows from the rank bound
$\operatorname{rank}(B_t)\le n$; the complete
statement is Lemma~\ref{lem:ns5} in
Appendix~\ref{app:ns}. These bounds control
alignment with the buffer. The central step
transfers that alignment to the current gradient.

\begin{lemma}[Gradient-update alignment]
\label{lem:muon_momentum}
Assume $\mathcal E$, let
$W_0,\ldots,W_{t-1}\in\mathbb B$, and suppose
$\|e_{t-1}\|_2>\sqrt{2\varepsilon}$. If
\[
    \eta\le
    (1-\mu)
    \frac{\kappa\sqrt{\lambda_0\varepsilon}}
         {8M^2n\alpha},
\]
then
\begin{equation}
    \langle G_t,O_t\rangle
    \ge\frac{\kappa}{2}\|G_t\|_F.
    \label{eq:muon_stale_condition}
\end{equation}
\end{lemma}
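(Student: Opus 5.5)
We plan to write the buffer as a discounted sum of gradients and compare it with the current gradient. Unrolling \eqref{eq:muon_trajectory} from $B_0=0$ gives $B_t=\sum_{s=1}^t\mu^{t-s}G_s$. Set $S_t:=\sum_{s=1}^t\mu^{t-s}=(1-\mu^t)/(1-\mu)$ and define the discrepancy
\[
D_t:=B_t-S_tG_t=\sum_{s=1}^{t}\mu^{t-s}(G_s-G_t).
\]
Then $\langle G_t,O_t\rangle=S_t^{-1}\bigl(\langle B_t,O_t\rangle-\langle D_t,O_t\rangle\bigr)$. Using \eqref{eq:ns5_matrix_bounds} together with $\|B_t\|_F\ge S_t\|G_t\|_F-\|D_t\|_F$, we get
\[
\langle G_t,O_t\rangle\ge\kappa\|G_t\|_F-S_t^{-1}(\kappa+M\sqrt n)\|D_t\|_F .
\]
For the cross term we would rather use $|\langle D_t,O_t\rangle|\le\|D_t\|_*\|O_t\|_2\le M\sqrt{2n}\,\|D_t\|_F$, since $D_t$ has rank at most $n$. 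It therefore suffices to show
\[
S_t^{-1}\|D_t\|_F\le\frac{\kappa}{4M\sqrt n}\|G_t\|_F ,
\]
or something of this form with constants absorbed. Throughout, we may assume $B_t\neq0$, because $G_t\neq0$ follows from the lower bound below.

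The next step bounds the gradient variation. Here $G_s-G_t$ comes from two sources: the change in residual $e_{s-1}-e_{t-1}$, and the change in activation patterns between $W_{s-1}$ and $W_{t-1}$. Write $G(W)=-\mathrm{mat}(J(W)^\top e(W))$ and split the difference as
\[
G_s-G_t=-\bigl(J_{s-1}^\top(e_{s-1}-e_{t-1})+(J_{s-1}-J_{t-1})^\top e_{t-1}\bigr).
\]
Both $W_{s-1}$ and $W_{t-1}$ lie in $\mathbb B$, so $\|J\|_2\le \alpha\sqrt n$ crudely. The residual change satisfies $\|e_{s-1}-e_{t-1}\|_2\le\|J\|\,\|W_{s-1}-W_{t-1}\|_F+\text{remainder}$, and the remainder is controlled by Lemma~\ref{lem:muon_ball}. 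The weight change is at most $\eta(t-s)M\sqrt n$ by \eqref{eq:ns5_matrix_bounds}. This gives a first-term contribution of order $\alpha^2 n M\eta(t-s)$. The Jacobian-difference term is the delicate one: it is controlled by the count of sign flips, which is bounded on $\mathcal E$ by Lemma~\ref{lem:muon_ball}/Lemma~\ref{lem:muon_initialization}. I expect this to give $\|(J_{s-1}-J_{t-1})^\top e_{t-1}\|\le c\,\alpha\sqrt{\lambda_0}\|e_{t-1}\|_2$ with a small $c\propto\kappa/M$. That smallness is exactly what the width condition is for.

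We then sum with weights. Using $\sum_{s\le t}\mu^{t-s}(t-s)\le\mu/(1-\mu)^2$ and $S_t\ge1$, the drift part gives
\[
S_t^{-1}\|D_t\|_F\lesssim\frac{\alpha^2nM\eta}{1-\mu}+c\,\alpha\sqrt{\lambda_0}\|e_{t-1}\|_2 .
\]
Here we used $S_t^{-1}\sum\mu^{t-s}(t-s)\le 1/(1-\mu)$, which is the source of the factor $(1-\mu)$ in $\eta$. On the other side, kernel positivity gives
\[
\|G_t\|_F\ge\alpha\sqrt{\lambda_0/2}\,\|e_{t-1}\|_2>\alpha\sqrt{\lambda_0\varepsilon} .
\]
Substituting the assumed $\eta$, the drift term becomes at most $\kappa\alpha\sqrt{\lambda_0\varepsilon}/(8M)$, which is a $\kappa/(8M)$-fraction of $\|G_t\|_F$. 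The pattern term is also a small fraction of $\|G_t\|_F$ once $c$ is small.

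The main obstacle is the second term, the activation-pattern change. The residual drift alone cannot be absorbed, because momentum mixes residuals from all earlier steps. The pattern change has to be bounded relative to the current residual, uniformly in $\mathbb B$. I would first try to extract from Lemma~\ref{lem:muon_ball} a Jacobian-perturbation bound $\|J(V)-J(W)\|_2\le\kappa\sqrt{\lambda_0}\alpha/(C M\sqrt n)$ on $\mathbb B$. Failing that, I would re-derive it from the near-boundary count on $\mathcal E$. If the resulting constants do not close, the fallback is to fold the extra $\sqrt n$ losses into the stated numerical width constant.
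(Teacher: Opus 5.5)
\paragraph{The gap.} The argument breaks at your reduction step. You bound the cross term by $|\langle D_t,O_t\rangle|\le M\sqrt n\,\|D_t\|_F$, whether through $\|O_t\|_F\le M\sqrt n$ or through $\|D_t\|_*\le\sqrt{\operatorname{rank}D_t}\,\|D_t\|_F$. That commits you to proving $S_t^{-1}\|D_t\|_F\le\frac{\kappa}{4M\sqrt n}\|G_t\|_F$, and under the lemma's step size the residual-drift part of $D_t$ cannot be certified at that level. The best available per-term bound is $\|J_{s-1}^\top(e_{s-1}-e_{t-1})\|_2\le(t-s)\,nM\alpha^2\eta$. With the correct weight sum, the drift part contributes up to $\frac{\mu}{1-\mu}nM\alpha^2\eta\le\frac{\mu\kappa}{8M}\alpha\sqrt{\lambda_0\varepsilon}$, essentially as you compute. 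The only lower bound available is $\|G_t\|_F>\alpha\sqrt{\lambda_0\varepsilon}$, so this is a $\mu\kappa/(8M)$-fraction of $\|G_t\|_F$. That exceeds your required $\kappa/(4M\sqrt n)$ as soon as $\mu\sqrt n>2$. This term involves only $\eta$, not $m$, so your fallback of folding the $\sqrt n$ into the width constant cannot rescue it. You would have to shrink $\eta$ by a factor $1/\sqrt n$, which proves a different lemma. The activation-pattern term suffers the same $\sqrt n$ loss. Sharpening \eqref{eq:muon_jacobian_close} by $1/\sqrt n$ would require redefining $\mathcal E$ and enlarging the width by a power of $n$, not by a numerical constant.

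\paragraph{The fix, and two smaller repairs.} The missing idea is to never pass through $\|O_t\|_F$, so that $O_t$ enters only via its spectral norm. Keep the target as $\kappa\|D_t\|_F+|\langle D_t,O_t\rangle|\le\frac\kappa2S_t\|G_t\|_F$ and expand each cross term with the gradient identity,
\[
\langle G_s-G_t,O_t\rangle=-(e_{s-1}-e_{t-1})^\top J_{s-1}\operatorname{vec}(O_t)-e_{t-1}^\top(J_{s-1}-J_{t-1})\operatorname{vec}(O_t).
\]
Then use $\|J(W)\operatorname{vec}(O)\|_2\le\alpha\sqrt n\,\|O\|_2$ (Cauchy--Schwarz over hidden units) and the Jacobian-action bound \eqref{eq:muon_jacobian_action}, both with $\|O_t\|_2\le M$. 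This gives $|\langle G_s-G_t,O_t\rangle|\le(t-s)nM^2\alpha^2\eta+\frac{\kappa\alpha\sqrt{\lambda_0}}{18}\|e_{t-1}\|_2$ with no extra $\sqrt n$. Lemma~\ref{lem:muon_ball} then suffices exactly as stated. If you prefer your nuclear-norm route, bound $\|D_t\|_*$ by duality, $\|\mathrm{mat}(J^\top v)\|_*=\sup_{\|O\|_2\le1}v^\top J\operatorname{vec}(O)$, rather than by rank.

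Two smaller steps also need repair:
\begin{itemize}
\item The residual drift should come from the per-step prediction bound $\|u(W_t)-u(W_{t-1})\|_2\le\sqrt n\,M\alpha\eta$, which holds because ReLU is $1$-Lipschitz and $\|O_t^\top x_i\|_2\le\|O_t\|_2$. Your route through $\|J\|_2\,\|W_{s-1}-W_{t-1}\|_F$ gives $\alpha nM\eta(t-s)$ for the residual, hence an $n^{3/2}$ rather than $n$ gradient drift. That is inconsistent with the order you state.
\item $S_t^{-1}\sum_j j\mu^j\le1/(1-\mu)$ does not follow from $\sum_j j\mu^j\le\mu/(1-\mu)^2$ and $S_t\ge1$; that combination would force $\eta\propto(1-\mu)^2$. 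Use instead $\sum_{j=1}^{t-1}j\mu^j=\sum_{i=1}^{t-1}\mu^i\sum_{k=0}^{t-1-i}\mu^k\le S_t\,\mu/(1-\mu)$.
\end{itemize}
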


The proof controls the discrepancy between the
normalized momentum buffer and the current
gradient using the stability estimates within
$\mathbb B$; see Appendix~\ref{app:momentum}.
The learning-rate factor $1-\mu$ compensates
for the accumulation of older gradients.

\begin{lemma}[One-step residual decrease]
\label{lem:muon_step}
Assume $\mathcal E$, let $W_{t-1},W_t\in\mathbb B$,
and suppose \eqref{eq:muon_stale_condition} holds.
Then
\begin{equation}
    \|e_t\|_2^2
    \le
    \|e_{t-1}\|_2^2
    -\frac{\kappa\alpha\eta\sqrt{\lambda_0}}{4}
     \|e_{t-1}\|_2
    +nM^2\alpha^2\eta^2.
    \label{eq:muon_squared_descent}
\end{equation}
If, additionally,
\begin{equation}
    \eta\le
    \frac{\kappa\sqrt{\lambda_0}\|e_{t-1}\|_2}
         {8M^2n\alpha},
    \label{eq:muon_local_stepsize}
\end{equation}
then
\begin{equation}
    \|e_t\|_2
    \le\|e_{t-1}\|_2
    -\frac{\kappa\alpha\eta\sqrt{\lambda_0}}{16}.
    \label{eq:muon_linear_descent}
\end{equation}
\end{lemma}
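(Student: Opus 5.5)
We expand the squared residual after one step. Write $\Delta=W_t-W_{t-1}=-\eta O_t$. Then
\[
u_t-u_{t-1}=J(W_{t-1})\operatorname{vec}(\Delta)+r_t ,
\]
where $\|r_t\|_2\le \frac{\kappa\alpha\sqrt{\lambda_0}}{18M}\|\Delta\|_2$ by the Lemma~\ref{lem:muon_ball} remainder bound. This applies because both $W_{t-1}$ and $W_t$ lie in $\mathbb B$. Since $e_t=e_{t-1}-(u_t-u_{t-1})$, we get
\[
\|e_t\|_2^2=\|e_{t-1}\|_2^2-2\langle e_{t-1},u_t-u_{t-1}\rangle+\|u_t-u_{t-1}\|_2^2 .
\]
The linear part satisfies $\langle e_{t-1},J\operatorname{vec}(\Delta)\rangle=-\eta\langle J^\top e_{t-1},\operatorname{vec}(O_t)\rangle=\eta\langle G_t,O_t\rangle$, using $\operatorname{vec}(G)=-J^\top e$. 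Then \eqref{eq:muon_stale_condition} gives $-2\langle e_{t-1},J\operatorname{vec}\Delta\rangle\le-\kappa\eta\|G_t\|_F$. Kernel positivity on $\mathbb B$ gives $\|G_t\|_F^2=e_{t-1}^\top H(W_{t-1})e_{t-1}\ge\frac{\alpha^2\lambda_0}{2}\|e_{t-1}\|_2^2$. So the main descent term is at most $-\frac{\kappa\alpha\eta\sqrt{\lambda_0}}{\sqrt2}\|e_{t-1}\|_2$.

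Next we control the error terms.

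\emph{Cross term.} We have $2|\langle e_{t-1},r_t\rangle|\le 2\|e_{t-1}\|_2\cdot\frac{\kappa\alpha\sqrt{\lambda_0}}{18M}\eta\|O_t\|_2\le\frac{\kappa\alpha\eta\sqrt{\lambda_0}}{9}\|e_{t-1}\|_2$, using $\|O_t\|_2\le M$ from \eqref{eq:ns5_matrix_bounds}.

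\emph{Quadratic term.} We bound $\|u_t-u_{t-1}\|_2\le\|J\|_2\eta\|O_t\|_F+\|r_t\|_2$. Each row of $J$ has norm at most $\alpha$, so $\|J\|_2\le\alpha\sqrt n$. Together with $\|O_t\|_F\le M\sqrt n$, the linear piece is at most $\alpha\eta Mn$. This is too large by a factor $\sqrt n$ compared with the target $nM^2\alpha^2\eta^2$.

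This is the step I expect to be the obstacle. I would sharpen it by computing the linear piece directly:
\[
\|J\operatorname{vec}(O_t)\|_2^2=\sum_i\frac{\alpha^2}{m}\Bigl(\sum_r \pm\mathds 1_{ir}\,x_i^\top o_r\Bigr)^2 .
\]
By Cauchy--Schwarz over $r$, each summand is at most $\alpha^2\|O_t^\top x_i\|_2^2\le\alpha^2\|O_t\|_2^2\le\alpha^2M^2$. Summing over $i$ gives $\|J\operatorname{vec}(\Delta)\|_2\le\sqrt n\,\alpha M\eta$. The remainder satisfies $\|r_t\|_2\le\frac{\kappa\alpha\sqrt{\lambda_0}}{18}\eta\le\frac{\alpha\eta}{18}$, using $\kappa\le M$ and $\lambda_0\le1/2$. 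Hence $\|u_t-u_{t-1}\|_2^2\le(1+1/18)^2n\alpha^2M^2\eta^2$.

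The constant $(1+1/18)^2>1$ is a problem for matching the stated coefficient exactly. I would absorb it by splitting $\|a+b\|^2\le\|a\|^2+2\|a\|\|b\|+\|b\|^2$ and charging the small pieces to the slack in the linear term. The available slack is $\frac{1}{\sqrt2}-\frac19-\frac14\approx0.346$ times $\kappa\alpha\eta\sqrt{\lambda_0}\|e_{t-1}\|_2$. Alternatively, I would simply check the paper's constant bookkeeping: the mixed terms $2\sqrt n\alpha M\eta\cdot\frac{\alpha\eta}{18}$ are of order $n\alpha^2M^2\eta^2/9$. If needed, one uses $\kappa\sqrt{\lambda_0}\le 1/\sqrt2$ to make the remainder small enough that the coefficient stays at most $nM^2\alpha^2\eta^2$ after a slightly sharper Cauchy--Schwarz. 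I would resolve this final constant by the cleanest of these two routes. Collecting terms gives \eqref{eq:muon_squared_descent}, since $\frac1{\sqrt2}-\frac19\ge\frac14$.

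For the second part, assume \eqref{eq:muon_local_stepsize}. Then $nM^2\alpha^2\eta^2\le\frac{\kappa\alpha\eta\sqrt{\lambda_0}}{8}\|e_{t-1}\|_2$, so
\[
\|e_t\|_2^2\le\|e_{t-1}\|_2^2-\frac{\kappa\alpha\eta\sqrt{\lambda_0}}{8}\|e_{t-1}\|_2 .
\]
Write $c=\frac{\kappa\alpha\eta\sqrt{\lambda_0}}{16}$ and $E=\|e_{t-1}\|_2$, so the right side is $E^2-2cE$. We have $c\le E/2$, since by \eqref{eq:muon_local_stepsize} $c\le\frac{\kappa^2\lambda_0E}{128M^2n}$. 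Therefore $E^2-2cE\le(E-c)^2$ with $E-c\ge0$. Taking square roots yields \eqref{eq:muon_linear_descent}.
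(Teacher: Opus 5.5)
Your expansion of $\|e_t\|_2^2$, the identity turning the linear term into $\eta\langle G_t,O_t\rangle$, and the use of \eqref{eq:muon_stale_condition} with the kernel lower bound all coincide with the paper's proof. So do the cross-term estimate with coefficient $\frac19$ and the square-root step; your $E^2-2cE\le(E-c)^2$ is an equivalent substitute for $\sqrt{1-z}\le1-z/2$.

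The gap is the quadratic term $\|u_t-u_{t-1}\|_2^2$ in the unconditional inequality \eqref{eq:muon_squared_descent}. Writing $u_t-u_{t-1}=J(W_{t-1})\operatorname{vec}(\Delta)+r_t$ and applying the triangle inequality necessarily produces extra terms $2\|J(W_{t-1})\operatorname{vec}(\Delta)\|_2\|r_t\|_2+\|r_t\|_2^2$ of order $\eta^2$. Neither remedy you suggest removes them. Charging them to the slack $(\frac1{\sqrt2}-\frac19-\frac14)\kappa\alpha\eta\sqrt{\lambda_0}\|e_{t-1}\|_2$ requires $\|e_{t-1}\|_2$ to be at least a constant multiple of $\sqrt n\,\alpha M\eta$. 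That is a step-size restriction the first claim does not assume, and for smaller residuals your bound is strictly weaker than \eqref{eq:muon_squared_descent}. No sharpening of $\|r_t\|_2$ can make $(1+c)^2\le1$ for $c>0$. Separately, $\kappa\le M$ and $\lambda_0\le\frac12$ give only $\kappa\sqrt{\lambda_0}\le M/\sqrt2$, not $\kappa\sqrt{\lambda_0}\le1$.

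The missing idea is to bound the actual change in predictions, not its linearization. Let $o_r$ denote the columns of $O_t$. Since $\sigma$ is $1$-Lipschitz and $|a_r|=\alpha$,
\[
|u(W_t)_i-u(W_{t-1})_i|\le\frac{\alpha\eta}{\sqrt m}\sum_r|o_r^\top x_i|\le\alpha\eta\|O_t^\top x_i\|_2\le M\alpha\eta,
\]
by exactly the Cauchy--Schwarz step you applied to $J\operatorname{vec}(O_t)$. Hence $\|u_t-u_{t-1}\|_2^2\le nM^2\alpha^2\eta^2$ with no overhead, and the remainder bound is needed only for the cross term $-2\langle e_{t-1},r_t\rangle$. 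This is how the paper proceeds, and this bound holds even without ball membership. With that substitution your proof is complete.

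Your absorption route would still give the second claim \eqref{eq:muon_linear_descent}. Under \eqref{eq:muon_local_stepsize} the extra terms are at most about $0.01\,\kappa\alpha\eta\sqrt{\lambda_0}\|e_{t-1}\|_2$, well within the slack. It fails only for the stand-alone first claim.
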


The proof appears in Appendix~\ref{app:step}.
The prescribed learning rate satisfies
\eqref{eq:muon_local_stepsize} whenever
$L(W_{t-1})>\varepsilon$. Thus, containment in
$\mathbb B$ and gradient-update alignment yield
a fixed decrease in residual norm until the
target is reached; summing this decrease gives
the budget in \eqref{eq:muon_budget}.

The exact polar factor maximizes buffer alignment
per unit spectral norm
(Proposition~\ref{prop:polar_optimal} in the appendix).
Our guarantee requires only the bounds in
\eqref{eq:ns5_matrix_bounds}, which the tuned
finite-step map satisfies without needing to
approximate the polar factor closely.
\section{Proof of the Finite-Time Guarantee}
\label{sec:main_proof}

The proof first establishes trajectory containment using
only the update norm and iteration budget. This makes
the uniform kernel bounds available without assuming
loss decrease. Gradient-update alignment then yields
a fixed residual decrease until the target is reached.

\begin{proof}[Proof of Theorem~\ref{thm:main_muon}]
By Lemma~\ref{lem:muon_initialization}, the event
$\mathcal E$ has probability at least $1-\delta$.
Fix an initialization in $\mathcal E$, so that
$\|e_0\|_2\le R_0$. If $R_0\le\sqrt{2\varepsilon}$,
the initial iterate already satisfies the conclusion.
Hence assume $R_0>\sqrt{2\varepsilon}$.

\paragraph{Trajectory containment.}
Every momentum buffer has its column space in
$\operatorname{span}\{x_1,\ldots,x_n\}$. Thus,
Lemma~\ref{lem:ns5} gives
\begin{equation}
    \|W_t-W_{t-1}\|_F
    =\eta\|O_t\|_F
    \le\eta M\sqrt n.
    \label{eq:proof_step_movement}
\end{equation}
Using $\lceil z\rceil\le z+1$ and the prescribed
learning rate and budget,
\begin{equation}
    T\eta
    \le
    \frac{16R_0}{\kappa\alpha\sqrt{\lambda_0}}+\eta
    \le
    \frac{17R_0}{\kappa\alpha\sqrt{\lambda_0}}.
    \label{eq:proof_total_budget}
\end{equation}
For the last inequality, observe that
\[
    \eta\kappa\alpha\sqrt{\lambda_0}
    =
    \frac{(1-\mu)\kappa^2\lambda_0\sqrt{\varepsilon}}
         {8M^2n}
    \le\frac{\sqrt{\varepsilon}}{16}
    <R_0,
\]
where $\kappa\le M$, $\lambda_0\le1/2$, and $n\ge1$.
Consequently, for every $0\le t\le T$,
\begin{equation}
    \|W_t-W_0\|_F
    \le t\eta M\sqrt n
    \le
    \frac{17M\sqrt n\,R_0}
         {\kappa\alpha\sqrt{\lambda_0}}
    <
    \frac{18M\sqrt n\,R_0}
         {\kappa\alpha\sqrt{\lambda_0}}.
    \label{eq:proof_ball_membership}
\end{equation}
Thus $W_0,\ldots,W_T\in\mathbb B$, and
Lemma~\ref{lem:muon_ball} applies throughout the budget.

\paragraph{Descent until the target is reached.}
Suppose, for contradiction, that
\begin{equation}
    \|e_t\|_2>\sqrt{2\varepsilon}
    \qquad\text{for every }0\le t\le T.
    \label{eq:proof_no_hit}
\end{equation}
For each $1\le t\le T$, trajectory containment and
the prescribed learning rate allow us to apply
Lemma~\ref{lem:muon_momentum}, yielding
$\langle G_t,O_t\rangle\ge(\kappa/2)\|G_t\|_F$.
Moreover,
\[
    \eta
    \le
    \frac{\kappa\sqrt{\lambda_0}\sqrt{\varepsilon}}
         {8M^2n\alpha}
    <
    \frac{\kappa\sqrt{\lambda_0}\|e_{t-1}\|_2}
         {8M^2n\alpha}.
\]
Hence Lemma~\ref{lem:muon_step} gives
\begin{equation}
    \|e_t\|_2
    \le\|e_{t-1}\|_2
    -\frac{\kappa\alpha\eta\sqrt{\lambda_0}}{16}.
    \label{eq:proof_induction_claim}
\end{equation}
Summing over the budget and substituting
\eqref{eq:muon_momentum_condition} yields
\begin{align}
    \|e_T\|_2
    &\le
    R_0-
    T\frac{\kappa\alpha\eta\sqrt{\lambda_0}}{16}
    \nonumber\\
    &=
    R_0-
    T\frac{(1-\mu)\kappa^2\lambda_0\sqrt{\varepsilon}}
           {128M^2n}
    \le0,
    \label{eq:proof_telescoping}
\end{align}
where the last inequality follows from
\eqref{eq:muon_budget}. This contradicts
\eqref{eq:proof_no_hit}. Therefore some $t\le T$
satisfies $L(W_t)\le\varepsilon$.
Since the argument holds on $\mathcal E$, the
conclusion has probability at least $1-\delta$.
\end{proof}
\section{Numerical Evaluation of the Theory}
\label{sec:experiments}

We examine the accuracy and momentum dependence of
the training dynamics, gradient-update alignment,
and kernel stability across widths. The experiments
use widths far below the sufficient condition in
Theorem~\ref{thm:main_muon}: for our dataset,
\eqref{eq:muon_width} requires $m\gtrsim2\times10^{27}$ and
\eqref{eq:muon_budget} gives $T\approx1.6\times10^{8}$ at
$\mu=0.95$ and $\varepsilon=10^{-2}$, whereas the width sweep uses widths $64$--$2000$
and its hitting times at the same $\mu$ and $\varepsilon$ are below $7.3\times10^{5}$. The theoretical
budget and stability bounds thus serve as references
rather than certified guarantees for these runs.
Detailed protocols are provided in Appendix~\ref{app:experimental_details}.

\paragraph{Setup.}
We train a two-layer ReLU student on unit-normalized
Gaussian inputs with labels generated by an independent,
fixed ReLU teacher. Only the student's first layer
is trained, using full-batch Muon with five tuned
Newton--Schulz steps evaluated in double precision.
Unless otherwise stated, $n=20$, $d=100$, $m=1000$,
$\alpha=1$, and $\delta=0.05$. We compute
$\lambda_0$ from the limiting kernel and $R_0$ from
\eqref{eq:muon_constants}, then prescribe $\eta$ and
the reference budget $T$ using
\eqref{eq:muon_momentum_condition} and
\eqref{eq:muon_budget}. The accuracy panels use
$\lambda_0\approx0.34$ and $R_0\approx26$; the saved runs used
in the remaining diagnostics have $\lambda_0=0.335$ and $R_0=30.2$.
We record the first target crossing,
$t_{\mathrm{hit}}=\min\{t:L(W_t)\le\varepsilon\}$.

\paragraph{Accuracy and momentum scaling.}

\begin{figure}[t]
    \centering
    \includegraphics[width=0.68\linewidth]{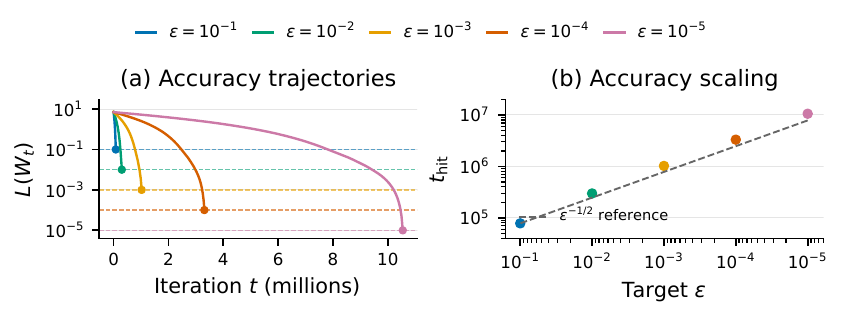}\\[0.2em]
    \includegraphics[width=0.68\linewidth]{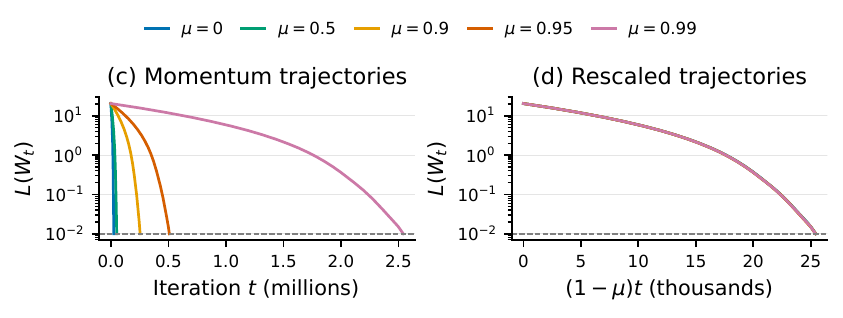}
    \caption{
    \textbf{Training loss across accuracy targets and momentum values.}
    (a) Loss curves for five target values of $\varepsilon$ at $\mu=0.95$;
    dots mark the first iteration reaching each target.
    (b) Hitting time versus $\varepsilon$, with an $\varepsilon^{-1/2}$
    reference line. (c) Loss curves for five momentum values at target
    $10^{-2}$, using $\eta=(1-\mu)\eta_0$ with
    $\eta_0=1.54\times10^{-4}$. (d) The curves in (c) plotted against
    $(1-\mu)t$.
    Panels (a,b) use a separate data draw ($\lambda_0\approx0.34$,
    $R_0\approx26$); panels (c,d) use the saved draw ($\lambda_0=0.335$,
    $R_0=30.2$).
    }
    \label{fig:scaling}
\end{figure}

The accuracy sweep fixes $\mu=0.95$ and uses
$\eta\propto\sqrt{\varepsilon}$. The panels show target attainment
before the reference budget and a trend consistent with its
$\varepsilon^{-1/2}$ dependence (Figure~\ref{fig:scaling}a,b).
This compares separate runs with different learning rates.

With $\eta=(1-\mu)\eta_0$, larger momentum requires more iterations,
and plotting against $(1-\mu)t$ approximately collapses the trajectories
(Figure~\ref{fig:scaling}c,d). This reflects the joint change in momentum
and learning rate. A separate fixed-step comparison uses $\eta=\eta_0$
for $50{,}000$ iterations: all three seeds at each displayed momentum
reach $10^{-2}$ in $14{,}680$--$25{,}444$ updates. Thus the factor
$1-\mu$ remains a sufficient choice; these runs do not establish necessity.

\paragraph{Gradient-update alignment.}

We directly examine the mechanism of
Lemma~\ref{lem:muon_momentum} by measuring
gradient-update alignment and the discrepancy
between the current gradient and normalized
momentum:
\begin{equation}
    A_t=\frac{\langle G_t,O_t\rangle}{\|G_t\|_F},
    \qquad
    D_t=\frac{\|B_t/s_t-G_t\|_F}{\|G_t\|_F},
    \qquad
    s_t=\frac{1-\mu^t}{1-\mu}.
    \label{eq:alignment_diagnostics}
\end{equation}
Here $B_t/s_t$ is the weighted average of stored
gradients. The theorem gives the reference
$A_t\ge\kappa/2=1/3$ under its assumptions, while
$D_t$ measures the buffer discrepancy involved
in transferring alignment to the current gradient.
We record both quantities before target attainment
for several momentum values.

\begin{figure}[t]
    \centering
    \includegraphics[width=0.7\linewidth]{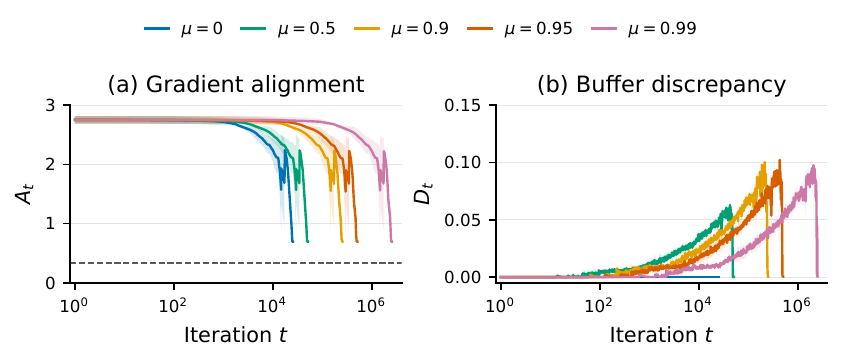}
    \caption{
    \textbf{Gradient alignment and momentum discrepancy.}
    Both panels plot iteration $t$ (logarithmic axis) for five momentum values
    $\mu$, each run until it first reaches the target $10^{-2}$.
    (a) Alignment $A_t=\langle G_t,O_t\rangle/\|G_t\|_F$ between the gradient
    and the update; the dashed line is the reference $\kappa/2=1/3$.
    (b) Buffer discrepancy $D_t=\|B_t/s_t-G_t\|_F/\|G_t\|_F$, which is
    identically zero at $\mu=0$.}
    \label{fig:alignment}
\end{figure}

Alignment remains about $2.7$ for most of training and declines near
the target; its smallest recorded value is $0.695$ (at $\mu=0.99$), above $1/3$.
The discrepancy is zero at $\mu=0$ and has run-wise maxima in
$[0.066,0.073]$ at $\mu=0.5$ and $[0.105,0.124]$ for $\mu\ge0.9$.
The alignment decrease also occurs at $\mu=0$, so it cannot be attributed
solely to momentum discrepancy.

\paragraph{Width and kernel stability.}

We vary student width while keeping the dataset,
teacher, target, momentum, and prescribed learning
rate fixed. Across repeated student initializations,
we record target attainment and monitor
\begin{equation}
    K_t=
    \frac{\lambda_{\min}(H(W_t))}{\alpha^2\lambda_0},
    \qquad
    S_t=
    \frac{\|H(W_t)-H(W_0)\|_2}{\|H(W_0)\|_2}.
    \label{eq:width_diagnostics}
\end{equation}
The reference $K_t=1/2$ corresponds to the kernel
lower bound used in the proof; $S_t$ measures
relative kernel variation. Figure~\ref{fig:width_stability}
summarizes these quantities over each run.
We also measure displacement relative to the analytical radius
in \eqref{eq:muon_radius_margin} (Figure~\ref{fig:appendix_displacement}).

\begin{figure}[t]
    \centering
    \includegraphics[width=0.7\linewidth]{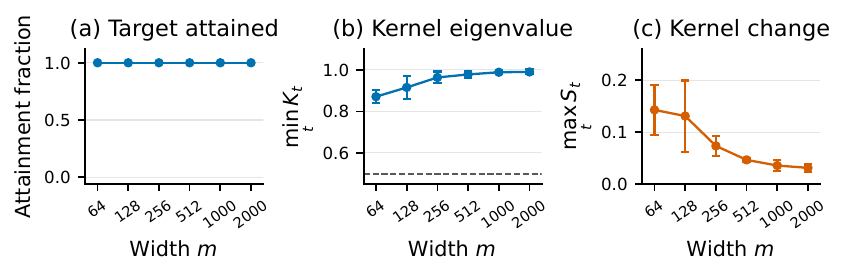}
    \caption{
    \textbf{Training success and kernel stability
    across widths.}
    (a) Target attainment within $1{,}414{,}000$ updates;
    (b) minimum $K_t$, with reference $1/2$; (c) maximum $S_t$.
    Points and bars show mean $\pm1$ sample standard deviation over
    five seeds per width, measured through target attainment.
    }
    \label{fig:width_stability}
\end{figure}

Every one of these $30$ runs reaches
$10^{-2}$. The run-wise minimum of $K_t$ has
mean $0.871\pm 0.030$ at width $64$ and
$0.990\pm 0.012$ at width $2000$, and the
smallest value on any run is $0.826$, above the
reference $1/2$. The run-wise maximum of $S_t$
falls from $0.143\pm 0.047$ at width $64$ to
$0.031\pm 0.007$ at width $2000$. No run fails
to reach the target.
These measurements examine behavior below the
sufficient width threshold; they do not establish
that threshold's necessity or tightness.

\paragraph{Finite maps and parameter movement.}

Figure~\ref{fig:ns_comparison} compares the tuned five-step map, five
classical cubic Newton--Schulz steps, and the exact polar factor on $59$ shared
training buffers. For a buffer $B$ with update $O$, we record the alignment
$a=\langle B,O\rangle/\|B\|_F$, the spectral norm $s=\|O\|_2$, and the ratio
$q=\langle B,O\rangle/(\|B\|_*\|O\|_2)$, whose maximum $1$ is attained by the
polar factor. The tuned map satisfies $a\ge2/3$ and $s\le5/4$ on every buffer,
as Assumption~\ref{ass:ns} requires. The median $q$ is $0.764$ for the tuned map
and $0.860$ for the classical map. These are diagnostics of the maps; they do
not compare training speed or distance to the polar factor.

\begin{figure}[t]
\centering
\includegraphics[width=0.74\linewidth]{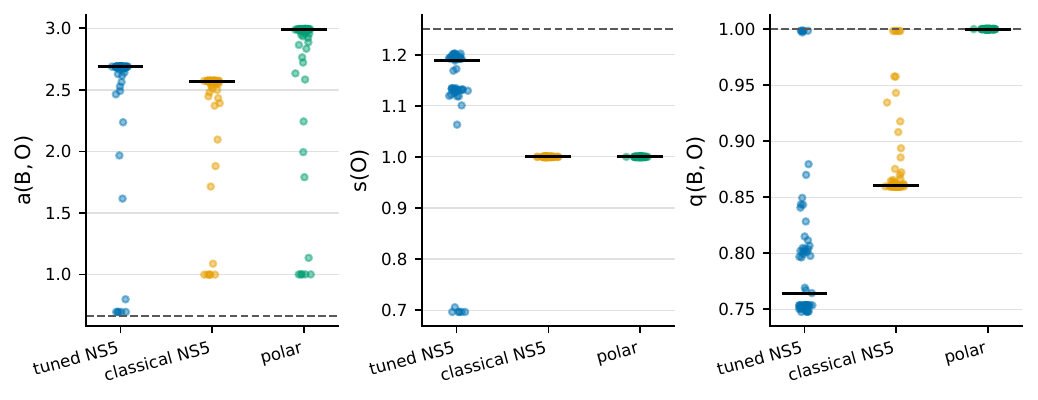}
\caption{\textbf{Finite maps on shared training buffers.}
Left to right: buffer alignment $a$, spectral norm $s$, and normalized
alignment per spectral norm $q$. Points are $59$ saved buffers from
student seed $10$; black ticks mark medians. Dashed references are
$2/3$, $5/4$, and the polar optimum $1$, respectively.
Classical NS5 uses the cubic recurrence.}
\label{fig:ns_comparison}
\end{figure}

\begin{figure}[t]
\centering
\begin{minipage}[c]{0.38\linewidth}
\includegraphics[width=\linewidth]{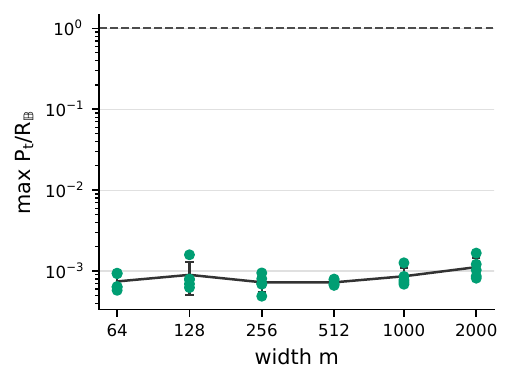}
\end{minipage}\hfill
\begin{minipage}[c]{0.59\linewidth}
\caption{\textbf{Movement within the analytical radius.}
Maximum $\|W_t-W_0\|_F/R_{\mathbb B}$ up to target attainment, with
$R_{\mathbb B}$ from \eqref{eq:muon_radius_margin}.
Points show five seeds per width; the line and bars show their mean
and $\pm1$ sample standard deviation. The dashed line is the ball boundary.}
\label{fig:appendix_displacement}
\end{minipage}
\end{figure}

In every width-sweep run, the target is reached with displacement at most
$1.67\times10^{-3}R_{\mathbb B}$, where $R_{\mathbb B}=7.88\times10^3$. The
iterates thus remain well inside $\mathbb B$ and the kernel remains positive,
consistent with the containment step of the proof. These measurements do not
verify the uniform Jacobian and remainder bounds at these widths.

\section{Conclusion}
\label{sec:conclusion}

We proved a finite-time training-loss guarantee for full-batch Muon with
momentum followed by five tuned Newton--Schulz iterations. For sufficiently wide
two-layer ReLU networks with fixed random output weights, a learning rate
proportional to $(1-\mu)\sqrt{\varepsilon}$ attains any prescribed loss
$\varepsilon$ within $\mathcal O\bigl((1-\mu)^{-1}\varepsilon^{-1/2}\bigr)$
iterations, with probability at least $1-\delta$ over initialization. The proof
uses only two properties of the finite-step map, alignment with the momentum
buffer and a bounded spectral norm. The tuned map has both without approximating
the polar factor, so exact orthogonalization is unnecessary.

The experiments are consistent with these mechanisms at widths far below the
sufficient threshold ($m\gtrsim2\times10^{27}$ for our dataset):
gradient-update alignment stays above $1/3$, the kernel ratio $K_t$ above $1/2$,
displacement below $1.7\times10^{-3}R_{\mathbb B}$, and all $30$ width-sweep
runs reach the target. They do not test necessity or tightness. On shared
buffers, classical Newton--Schulz has a higher median alignment per spectral
norm than the tuned map, which does not by itself imply faster training.

\textbf{Future work.} Four directions remain: (i) sharper width and momentum
requirements, and whether momentum can accelerate Muon as it does gradient
descent in \citet{wang2021modular}; (ii) analysis beyond the near-initialization regime, in
which the features move; (iii) minibatch gradients and finite-precision
arithmetic; and (iv) deeper networks and joint training of both layers.

\clearpage
\section*{Reproducibility Statement}
The network, initialization, selected ReLU derivative, and full-batch loss are
specified in Section~\ref{sec:model}. Algorithm~\ref{alg:muon} is the Muon
update analyzed in the paper, and
equations~\eqref{eq:ns_iteration}--\eqref{eq:muon_tuned_polynomial} fix all five
Newton--Schulz steps; Assumption~\ref{ass:ns} states the only properties of
the Newton--Schulz steps that the proof uses, and
Appendix~\ref{app:jordan_coefficients} verifies them for the coefficients of
\citet{jordan2024muon}. Equations
\eqref{eq:muon_constants}--\eqref{eq:muon_budget} specify the width, momentum
range, learning rate, and update budget. No empirical result is used in the
proof.

\clearpage
\appendix
\section{Comparison with Prior Guarantees}
\label{app:related_guarantees}

\begin{table}[H]
\caption{Non-asymptotic guarantees for wide-network training and Muon.
The first eight rows follow the raw gradient: GD for the first five and SGD
for the next three. Networks have depth $2$ except in
\citet{du2019gradient,zou2019improved,allen2019convergence,cao2019generalization},
which allow depth $>2$. The last three rows use momentum followed by,
respectively, the exact polar factor, classical Newton--Schulz (NS), and five
tuned NS steps. The two prior Muon results concern general $L$-smooth
objectives. Population-risk and stationarity guarantees are not directly
comparable to training-loss guarantees. A dash denotes an unstated explicit
bound. For \citet{arora2019fine}, $\nu=O(\varepsilon\delta/\sqrt n)$ is the
initialization scale. For our row, $\alpha$ and $\|y\|_2^2/n$ are treated as constants and
$T=\mathcal O\bigl(n^{3/2}/((1-\mu)\lambda_0
\sqrt{\delta\varepsilon})\bigr)$.}
\label{tab:related}
\centering
\renewcommand{\arraystretch}{1.12}
\begin{tabular*}{\linewidth}{@{\extracolsep{\fill}}lll@{}}
\toprule
Reference & Guarantee & Width\\
\midrule
\citet{du2018gradient} & $L(W_T)\to0$, linear & $\Omega(n^6/(\lambda_0^4\delta^3))$\\
\citet{arora2019fine} & $L(W_T)\le\varepsilon$ & $\Omega(n^7/(\lambda_0^4\nu^2\delta^4\varepsilon^2))$\\
\citet{ji2019polylogarithmic} & test error $\le\varepsilon$ & $\operatorname{polylog}(1/\varepsilon)$\\
\citet{du2019gradient} & $L(W_T)\to0$, linear & -- ($\operatorname{poly}(n)$)\\
\citet{zou2019improved} & $L(W_T)\to0$, faster & $<$ Allen-Zhu et al.\\
\citet{li2018learning} & pop.\ risk $\to0$ & margin-dependent\\
\citet{allen2019convergence} & $L(W_T)\to0$, poly time & $\Omega(n^{24})$\\
\citet{cao2019generalization} & pop.\ risk $\le\widetilde{\mathcal O}(n^{-1/2})$ & sufficiently wide\\
\citet{chen2025muon} & KKT score $\mathcal O(1/\sqrt T)$ & --\\
\citet{kim2026muonns} & stationarity & --\\
\citet{sato2025muon} & stationarity & --\\
\citet{chang2025muon} & stationarity & --\\
\citet{deoliveira2026muon} & stationarity; linear under PL & --\\
\midrule
\textbf{This work} & $\min_{0\le t\le T}L(W_t)\le\varepsilon$ &
$\Omega\bigl(n^5/(\lambda_0^4\delta^3)\bigr)$\\
\bottomrule
\end{tabular*}
\end{table}

\section{Supporting Lemmas and Proofs}\label{sec:lemma_proofs}

\subsection{Initialization}\label{app:init}

\begin{lemma}[Initialization event]\label{lem:muon_initialization}
Let
\begin{align}
 \mathcal E_1
 &:=\{\|e_0\|_2\le R_0\},\nonumber\\
 \mathcal E_2
 &:=\left\{\|H(W_0)-\alpha^2H^\infty\|_2
 \le\frac{\alpha^2\lambda_0}{4}\right\},\nonumber\\
 \mathcal E_3
 &:=\left\{\frac1m\sum_{i=1}^n\sum_{r=1}^m
 \mathds1\Bigl\{|w_r(0)^\top x_i|\le\frac{\delta\kappa^2\lambda_0}{1620M^2n}\Bigr\}
 \le\frac{\kappa^2\lambda_0}{648M^2}\right\}.
 \label{eq:muon_good_events}
\end{align}
Under the width condition \eqref{eq:muon_width}, each complement has
probability at most $\delta/3$. Hence
$\mathcal E:=\mathcal E_1\cap\mathcal E_2\cap\mathcal E_3$ satisfies
$\Pr(\mathcal E)\ge1-\delta$.
\end{lemma}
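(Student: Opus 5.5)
We bound each complement by $\delta/3$ separately, using Markov's inequality for $\mathcal E_1$ and $\mathcal E_3$ and a matrix concentration bound for $\mathcal E_2$. A union bound then gives $\Pr(\mathcal E)\ge1-\delta$.

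\textbf{Event $\mathcal E_1$.} We compute $\mathbb E\|e_0\|_2^2=\sum_i\mathbb E(y_i-u_i(0))^2$. Since the $a_r$ are symmetric and independent of $W_0$, we have $\mathbb E u_i(0)=0$. The cross terms $\mathbb E[a_ra_s]$ vanish for $r\ne s$, so
\[
\mathbb E u_i(0)^2=\frac{\alpha^2}{m}\sum_r\mathbb E\sigma(w_r^\top x_i)^2=\frac{\alpha^2}{2},
\]
because $w_r^\top x_i\sim\mathcal N(0,1)$ and $\mathbb E\sigma(g)^2=1/2$. Hence $\mathbb E\|e_0\|_2^2=\|y\|_2^2+n\alpha^2/2$. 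Markov's inequality applied to $\|e_0\|_2^2$ at level $R_0^2=3(\|y\|_2^2+n\alpha^2/2)/\delta$ gives $\Pr(\mathcal E_1^c)\le\delta/3$. This step needs no width condition.

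\textbf{Event $\mathcal E_2$.} Each entry is an average of i.i.d.\ terms,
\[
H_{ij}(W_0)=\frac{\alpha^2}{m}\sum_r x_i^\top x_j\,\mathds1\{w_r^\top x_i\ge0,\,w_r^\top x_j\ge0\},
\]
because $a_r^2=\alpha^2$. The summands lie in $[-\alpha^2,\alpha^2]$, and their mean is $\alpha^2H^\infty_{ij}$; the measure-zero tie at $0$ does not affect this. Each entry therefore has variance at most $\alpha^4/m$, so
\[
\mathbb E\|H(W_0)-\alpha^2H^\infty\|_F^2\le\frac{\alpha^4n^2}{m}.
\]
We use $\|\cdot\|_2\le\|\cdot\|_F$ and apply Markov to the squared Frobenius norm at level $(\alpha^2\lambda_0/4)^2$. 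This bounds the failure probability by $16n^2/(m\lambda_0^2)$, which is at most $\delta/3$ once $m\ge48n^2/(\lambda_0^2\delta)$. That requirement is implied by \eqref{eq:muon_width}, since $M/\kappa\ge1$, $\lambda_0\le1/2$ and $\delta<1$. This is the Du et al.\ style argument; Hoeffding would also work but is not needed.

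\textbf{Event $\mathcal E_3$.} This is the step needing the most bookkeeping. Write $\tau=\delta\kappa^2\lambda_0/(1620M^2n)$. Since $w_r(0)^\top x_i\sim\mathcal N(0,1)$, anti-concentration gives
\[
\Pr\bigl(|w_r(0)^\top x_i|\le\tau\bigr)\le\frac{2\tau}{\sqrt{2\pi}}\le\tau.
\]
The expectation of the normalized count $\frac1m\sum_{i,r}\mathds1\{\cdot\}$ is therefore at most $n\tau=\delta\kappa^2\lambda_0/(1620M^2)$. Markov at level $\kappa^2\lambda_0/(648M^2)$ gives failure probability at most $648/1620=0.4\,\delta$, which exceeds $\delta/3$, so plain Markov falls short by a constant.

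We therefore sharpen the count with concentration. For fixed $i$, the indicators are i.i.d.\ over $r$ with mean $p\le 2\tau/\sqrt{2\pi}\approx0.8\tau$. Using the exact constant $\sqrt{2/\pi}\approx0.798$ in the Markov step already gives $0.4\cdot0.798\,\delta\approx0.319\,\delta<\delta/3$. We use this refinement. As a fallback, Chebyshev on the sum with variance at most $n\cdot(\text{mean})/m$, together with the large width, also gives a $\delta/3$ bound. A union bound over $\mathcal E_1^c,\mathcal E_2^c,\mathcal E_3^c$ concludes the proof.
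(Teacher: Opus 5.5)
Your proof is correct, but for $\mathcal E_2$ it uses a different concentration argument from the paper's. The other two parts coincide with it. For $\mathcal E_1$, you both apply Markov to $\|e_0\|_2^2$ with $\mathbb E\|e_0\|_2^2=\|y\|_2^2+n\alpha^2/2$. For $\mathcal E_3$, you both apply Markov to the boundary count with the density bound $\sqrt{2/\pi}\,\tau$. The paper closes the arithmetic via $\sqrt{2/\pi}\le 5/6$, so that $\tfrac{648}{1620}\cdot\tfrac56=\tfrac13$; your $0.319\,\delta$ is the same step. Your ``sharpen with concentration'' framing and the Chebyshev fallback are therefore unnecessary: the only fix needed is to keep the exact Gaussian density constant instead of rounding it up to $1$.

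\textbf{Event $\mathcal E_2$.} The paper's route:
\begin{itemize}
\item It applies Hoeffding to each entry $H(W_0)_{ij}/\alpha^2$ at tolerance $\lambda_0/(4n)$.
\item It takes a union bound over the $n^2$ pairs and passes to the operator norm through $\|D\|_2\le\|D\|_F\le n\max_{i,j}|D_{ij}|$.
\item This needs $m\ge(8n^2/\lambda_0^2)\log(6n^2/\delta)$, which the paper then loosens via $\log z\le z$ to $m\ge 48n^4/(\lambda_0^2\delta)$.
\end{itemize}
You instead bound $\mathbb E\|H(W_0)-\alpha^2H^\infty\|_F^2\le\alpha^4n^2/m$ by summing entrywise variances, and apply Markov to the squared Frobenius norm. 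This needs only $m\ge 48n^2/(\lambda_0^2\delta)$.

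Your route is more elementary, with no exponential tail inequality and no union bound. In the form actually used, it also asks for less width than the paper's loosened requirement. Hoeffding's genuine advantage is $\log(1/\delta)$ dependence rather than $1/\delta$, but the paper throws that away with $\log z\le z$. The difference has no effect on the theorem, because the width condition \eqref{eq:muon_width} is driven by the gate-flip count in Lemma~\ref{lem:muon_ball}, not by $\mathcal E_2$.
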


\begin{proof}[Proof of Lemma~\ref{lem:muon_initialization}]
We show that each of the three events fails with probability at most
$\delta/3$; a union bound then gives $\Pr(\mathcal E)\ge1-\delta$.

\noindent\textbf{Part 1.}
For $Z\sim\mathcal N(0,1)$, $\mathbb E\sigma(Z)^2=\mathbb E[Z^2\mathds1\{Z\ge0\}]=1/2$.
Since the $a_r$ are independent of $W_0$ with $\mathbb E a_r=0$ and
$a_r^2=\alpha^2$, and $w_r(0)^\top x_i\sim\mathcal N(0,1)$,
\begin{equation}
 \mathbb E u(W_0)_i=0,\qquad
 \mathbb E u(W_0)_i^2=\frac1m\sum_{r=1}^m\alpha^2\,\mathbb E\sigma(w_r(0)^\top x_i)^2
 =\frac{\alpha^2}{2}.
 \label{eq:appendix_initial_moments}
\end{equation}
Then
\begin{align}
 \Pr(\mathcal E_1^c)
 &=\Pr(\|e_0\|_2^2>R_0^2)\nonumber\\
 &\le\frac{\mathbb E\|e_0\|_2^2}{R_0^2}
   \why{Markov's inequality}\nonumber\\
 &=\frac{\|y\|_2^2+n\alpha^2/2}{R_0^2}
   \why{cross term vanishes by \eqref{eq:appendix_initial_moments}}\nonumber\\
 &=\frac\delta3
   \why{definition \eqref{eq:muon_constants} of $R_0$}.
 \label{eq:appendix_E1_probability}
\end{align}

\noindent\textbf{Part 2.}
For fixed $i,j$,
\begin{equation}
 \frac{H(W_0)_{ij}}{\alpha^2}
 =\frac1m\sum_{r=1}^m Z_r^{ij},\qquad
 Z_r^{ij}:=x_i^\top x_j\,
 \mathds1\{w_r(0)^\top x_i\ge0,\ w_r(0)^\top x_j\ge0\}.
 \label{eq:appendix_kernel_summands}
\end{equation}
The $Z_r^{ij}$ are independent over $r$, have mean $H^\infty_{ij}$ by
\eqref{eq:muon_hinf}, and take values in an interval of length
$|x_i^\top x_j|\le1$. Since $\log z\le z$, $\kappa\le M$, $\lambda_0\le\frac12$
and $\delta<1$, the width condition \eqref{eq:muon_width} gives
$m\ge48n^4/(\lambda_0^2\delta)\ge(8n^2/\lambda_0^2)\log(6n^2/\delta)$. Hence
\begin{align}
 \Pr\Bigl(\max_{i,j}\Bigl|\frac{H(W_0)_{ij}}{\alpha^2}-H^\infty_{ij}\Bigr|>\frac{\lambda_0}{4n}\Bigr)
 &\le n^2\cdot2\exp\Bigl(-\frac{m\lambda_0^2}{8n^2}\Bigr)
   \why{union bound, Hoeffding's inequality}\nonumber\\
 &\le\frac\delta3
   \why{$m\ge(8n^2/\lambda_0^2)\log(6n^2/\delta)$}.
 \label{eq:appendix_hoeffding}
\end{align}
Outside this event, $D:=H(W_0)/\alpha^2-H^\infty$ satisfies
\begin{align}
 \|D\|_2
 &\le\|D\|_F
   \why{operator norm $\le$ Frobenius norm}\nonumber\\
 &\le n\max_{i,j}|D_{ij}|
   \why{$D$ has $n^2$ entries}\nonumber\\
 &\le\frac{\lambda_0}{4},
 \label{eq:appendix_kernel_operator}
\end{align}
which is $\mathcal E_2$ after multiplying by $\alpha^2$. Hence
$\Pr(\mathcal E_2^c)\le\delta/3$.

\noindent\textbf{Part 3.}
The density of $\mathcal N(0,1)$ is at most $1/\sqrt{2\pi}$, so
$\Pr\bigl(|w_r(0)^\top x_i|\le\frac{\delta\kappa^2\lambda_0}{1620M^2n}\bigr)
\le\sqrt{2/\pi}\,\frac{\delta\kappa^2\lambda_0}{1620M^2n}$. Let $N$ be the double
sum in $\mathcal E_3$. Then
\begin{align}
 \Pr(\mathcal E_3^c)
 &=\Pr\Bigl(\frac Nm>\frac{\kappa^2\lambda_0}{648M^2}\Bigr)\nonumber\\
 &\le\frac{648M^2}{\kappa^2\lambda_0}\,\mathbb E\frac Nm
   \why{Markov's inequality}\nonumber\\
 &\le\frac{648M^2}{\kappa^2\lambda_0}\cdot n\sqrt{\frac2\pi}\,
   \frac{\delta\kappa^2\lambda_0}{1620M^2n}
   \why{linearity of expectation}\nonumber\\
 &\le\frac\delta3
   \why{$\sqrt{2/\pi}\le\tfrac56$}.
 \label{eq:appendix_E3_probability}
\end{align}
The union bound over Parts~1--3 completes the proof.
\end{proof}

\subsection{Uniform control in the stability ball}\label{app:ball}

Throughout this subsection, the \emph{gate} of hidden unit $r$ on input
$x_i$ at weights $Z=(z_1,\ldots,z_m)$ is the indicator
$\mathds1\{z_r^\top x_i\ge0\}$, and the gate \emph{flips} between $Z$ and
$Z'$ if $\mathds1\{z_r^\top x_i\ge0\}\neq\mathds1\{z_r'^\top x_i\ge0\}$.
These indicators are exactly the ones appearing in \eqref{eq:muon_gradient}
and \eqref{eq:muon_jacobian}.

\begin{lemma}[Uniform NTK and ReLU remainder]\label{lem:muon_ball}
On $\mathcal E$, for every $W,V\in\mathbb B$ and every matrix $O$,
\begin{align}
 \lambda_{\min}(H(W))&\ge\frac{\alpha^2\lambda_0}{2},
 \label{eq:muon_kernel_lower}\\
 \|J(W)-J(W_0)\|_F&\le\frac{\kappa\alpha\sqrt{\lambda_0}}{18M},
 \label{eq:muon_jacobian_close}\\
 \|(J(W)-J(V))\operatorname{vec}(O)\|_2
 &\le\frac{\kappa\alpha\sqrt{\lambda_0}}{18M}\,\|O\|_2,
 \label{eq:muon_jacobian_action}\\
 \|u(V)-u(W)-J(W)\operatorname{vec}(V-W)\|_2
 &\le\frac{\kappa\alpha\sqrt{\lambda_0}}{18M}\,\|V-W\|_2.
 \label{eq:muon_remainder}
\end{align}
The ball is measured in Frobenius norm and the last two bounds in operator
norm, because one step moves the weights by $\eta\|O_t\|_2\le\eta M$.
\end{lemma}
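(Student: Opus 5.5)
The plan is the standard gate-flip argument of \citet{du2018gradient}, with the event $\mathcal E_3$ supplying the counting bound and a Cauchy--Schwarz step that turns the Frobenius radius of $\mathbb B$ into an $O(1/\sqrt m)$ Jacobian perturbation. Write $\tau:=\delta\kappa^2\lambda_0/(1620M^2n)$ and $\rho:=18M\sqrt n R_0/(\kappa\alpha\sqrt{\lambda_0})$ for the radius of $\mathbb B$.

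\textbf{Counting flipped gates.} For $W\in\mathbb B$, a gate $(r,i)$ can differ from its initial value only if $|w_r(0)^\top x_i|\le|(w_r-w_r(0))^\top x_i|\le\|w_r-w_r(0)\|_2$. The pairs with $|w_r(0)^\top x_i|\le\tau$ number at most $m\kappa^2\lambda_0/(648M^2)$ by $\mathcal E_3$. Every other flipped pair has $\|w_r-w_r(0)\|_2>\tau$. By Chebyshev, the number of neurons with $\|w_r-w_r(0)\|_2>\tau$ is at most $\rho^2/\tau^2$, and each contributes at most $n$ pairs. Hence the set $S_W$ of pairs flipped relative to $W_0$ satisfies
\[
|S_W|\le m\frac{\kappa^2\lambda_0}{648M^2}+\frac{n\rho^2}{\tau^2}.
\]
Substituting $\rho,\tau,R_0$ shows that the second term is $O\bigl(M^6n^4(n\alpha^2+\|y\|_2^2)/(\kappa^6\lambda_0^3\delta^3\alpha^2)\bigr)$. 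The width condition \eqref{eq:muon_width} is designed to make this term at most another small multiple of $m\kappa^2\lambda_0/M^2$, so $|S_W|\le c\,m\kappa^2\lambda_0/M^2$ with $c$ small. This constant bookkeeping is the main obstacle: the powers of $M/\kappa$ and $\delta$ must line up, and if they do not I would rebalance $\tau$.

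\textbf{Jacobian bounds.} Each flipped pair changes one block of $J$ by $\alpha x_i^\top/\sqrt m$, so
\[
\|J(W)-J(W_0)\|_F^2\le\frac{\alpha^2|S_W|}{m}\le\frac{\kappa^2\alpha^2\lambda_0}{324M^2},
\]
which gives \eqref{eq:muon_jacobian_close} once $c\le1/324$, which is the constant I am aiming for. For \eqref{eq:muon_jacobian_action}, the $i$-th entry of $(J(W)-J(V))\operatorname{vec}(O)$ equals $\frac{1}{\sqrt m}\sum_r a_r\,\Delta_{ri}\,x_i^\top o_r$, where $\Delta_{ri}\in\{0,\pm1\}$ is supported on $S_W\cup S_V$. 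Cauchy--Schwarz over $r$ bounds its square by
\[
\frac{\alpha^2}{m}\,\bigl|\{r:(r,i)\in S_W\cup S_V\}\bigr|\sum_r (x_i^\top o_r)^2=\frac{\alpha^2}{m}\,|\cdot|\;\|O^\top x_i\|_2^2\le\frac{\alpha^2}{m}\,|\cdot|\;\|O\|_2^2.
\]
Summing over $i$ gives at most $\frac{\alpha^2}{m}\bigl(|S_W|+|S_V|\bigr)\|O\|_2^2$, which is bounded by the same constant after halving $c$.

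\textbf{Remainder and kernel lower bound.} For \eqref{eq:muon_remainder}, write $f(x_i;V)-f(x_i;W)-J(W)_i\operatorname{vec}(V-W)$. For ReLU, $\sigma(v^\top x)-\sigma(w^\top x)-\mathds1\{w^\top x\ge0\}(v-w)^\top x$ vanishes unless the gate flips between $w$ and $v$, and in that case it is bounded in absolute value by $|(v-w)^\top x|$. Flips between $W$ and $V$ lie in $S_W\cup S_V$, so the Cauchy--Schwarz computation above applies with $O=V-W$. For \eqref{eq:muon_kernel_lower}, combine $\mathcal E_2$, which gives $\lambda_{\min}(H(W_0))\ge3\alpha^2\lambda_0/4$, with
\[
\|H(W)-H(W_0)\|_2\le\|J(W)-J(W_0)\|_2\bigl(\|J(W)\|_2+\|J(W_0)\|_2\bigr).
\]
Here $\|J\|_2\le\|J\|_F\le\alpha\sqrt n$, so this is at most $2\alpha\sqrt n\cdot\kappa\alpha\sqrt{\lambda_0}/(18M)$. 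This bound is too weak by a factor of $\sqrt{n/\lambda_0}$. I would instead bound it directly: $|H_{ij}(W)-H_{ij}(W_0)|\le\frac{\alpha^2}{m}\#\{r:(r,i)\text{ or }(r,j)\in S_W\}$, so $\|H(W)-H(W_0)\|_2\le\|\cdot\|_F\le 2n\alpha^2|S_W|/m$. Requiring this to be at most $\alpha^2\lambda_0/4$ needs $|S_W|\lesssim m\lambda_0/n$, which forces the $n$-dependence of the first term. So the first term in $\mathcal E_3$ may need to be refined, possibly by splitting per input $i$. This is the delicate point I would check against the paper's constants first.
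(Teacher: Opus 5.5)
\textbf{The gap is the kernel bound \eqref{eq:muon_kernel_lower}.} You correctly find that both of your routes fail on the given event. Bounding $\|H(W)-H(W_0)\|_2$ by $\|J(W)-J(W_0)\|_2(\|J(W)\|_2+\|J(W_0)\|_2)$ loses a factor $\sqrt{n/\lambda_0}$. The entrywise route needs a flip fraction of order $\lambda_0/n$, which $\mathcal E_3$ does not provide. Your remedy, refining $\mathcal E_3$, would change the event rather than prove the lemma on $\mathcal E$, and it is not needed.

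The missing idea is to perturb $J(W)^\top$, the square root of the kernel, instead of $H$ itself. For every $q\in\mathbb R^n$, the reverse triangle inequality (equivalently, Weyl's inequality for singular values) gives
\[
 \|J(W)^\top q\|_2\ge\|J(W_0)^\top q\|_2-\|J(W)-J(W_0)\|_2\|q\|_2
 \ge\Bigl(\frac{\sqrt3}{2}-\frac1{18}\Bigr)\alpha\sqrt{\lambda_0}\,\|q\|_2 .
\]
Here $\|J(W_0)^\top q\|_2^2=q^\top H(W_0)q\ge\frac34\alpha^2\lambda_0\|q\|_2^2$ on $\mathcal E_2$. The subtracted term is controlled by your own bound \eqref{eq:muon_jacobian_close} together with $\kappa\le M$. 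Since $(\sqrt3/2-1/18)^2>1/2$, squaring yields $\lambda_{\min}(H(W))\ge\alpha^2\lambda_0/2$.

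This works because \eqref{eq:muon_jacobian_close} is already small at the scale $\alpha\sqrt{\lambda_0}$ of $\sigma_{\min}(J(W_0)^\top)$. Comparing quadratic forms of $H$ instead produces a cross term with $\|J\|_2$, which you can only bound by $\alpha\sqrt n$. That is exactly where your factor $\sqrt{n/\lambda_0}$ came from. This is the paper's argument.

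\textbf{The rest of your plan follows the paper's proof.} You use the near-boundary pairs from $\mathcal E_3$, plus the neurons that moved more than $\tau$ (counted by Chebyshev), then Cauchy--Schwarz over hidden units for \eqref{eq:muon_jacobian_close}, \eqref{eq:muon_jacobian_action} and \eqref{eq:muon_remainder}. The bookkeeping you flagged closes without rebalancing $\tau$. The quantity $n\rho^2/\tau^2$ is of order $M^6n^4(n+\|y\|_2^2/\alpha^2)/(\kappa^6\lambda_0^3\delta^3)$, and \eqref{eq:muon_width} makes $2n\rho^2/\tau^2\le m\kappa^2\lambda_0/(648M^2)$.

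One correction is needed for \eqref{eq:muon_jacobian_action} and \eqref{eq:muon_remainder}. Do not bound the support by $|S_W|+|S_V|$: that counts the $\mathcal E_3$ pairs twice, and those alone exhaust the budget $m\kappa^2\lambda_0/(324M^2)$. You also cannot halve $c$, because the threshold in $\mathcal E_3$ is fixed. Instead, bound $|S_W\cup S_V|$ by the near-boundary pairs counted once plus $n(|\mathcal I(W)|+|\mathcal I(V)|)$. Here $\mathcal I(Z)$ is the set of neurons $r$ with $\|z_r-w_r(0)\|_2>\tau$. This is what the paper does.
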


\begin{proof}[Proof of Lemma~\ref{lem:muon_ball}]
Fix an initialization in $\mathcal E$ and arbitrary
$W=(w_1,\ldots,w_m)$ and $V=(v_1,\ldots,v_m)$ in $\mathbb B$. Part 1
counts the gates that can flip; Part 2 turns that count into the Jacobian
bound \eqref{eq:muon_jacobian_close} and the kernel bound
\eqref{eq:muon_kernel_lower}; Parts 3 and 4 turn it into the remainder bound
\eqref{eq:muon_remainder} and the bound \eqref{eq:muon_jacobian_action}.

\noindent\textbf{Part 1.}
For $Z\in\{W,V\}$ and each $i$, define
\begin{equation}
 \mathcal S_i:=\Bigl\{r:|w_r(0)^\top x_i|\le\frac{\delta\kappa^2\lambda_0}{1620M^2n}\Bigr\},
 \qquad
 \mathcal I(Z):=\Bigl\{r:\|z_r-w_r(0)\|_2>\frac{\delta\kappa^2\lambda_0}{1620M^2n}\Bigr\}.
 \label{eq:appendix_margin_sets}
\end{equation}
If $r\notin\mathcal S_i\cup\mathcal I(Z)$, then, using $\|x_i\|_2=1$,
\begin{equation}
 |(z_r-w_r(0))^\top x_i|
 \le\|z_r-w_r(0)\|_2
 \le\frac{\delta\kappa^2\lambda_0}{1620M^2n}<|w_r(0)^\top x_i|,
 \label{eq:appendix_no_gate_change}
\end{equation}
so $z_r^\top x_i$ and $w_r(0)^\top x_i$ have the same nonzero sign and the
gate of unit $r$ on $x_i$ does not flip between $W_0$ and $Z$. Hence the set
\begin{equation}
 \mathcal A_i(W,V):=\{r:\text{the gate of $r$ on $x_i$ flips between
 $W_0$ and $W$, or between $W_0$ and $V$}\}
 \label{eq:appendix_bad_gate_set}
\end{equation}
satisfies
$\mathcal A_i(W,V)\subseteq\mathcal S_i\cup\mathcal I(W)\cup\mathcal I(V)$.
Each $r\in\mathcal I(Z)$ contributes more than
$(\delta\kappa^2\lambda_0/(1620M^2n))^2$ to $\|Z-W_0\|_F^2$, and $Z\in\mathbb B$,
so
\begin{equation}
 |\mathcal I(Z)|
 \le\Bigl(\frac{18M\sqrt n\,R_0}{\kappa\alpha\sqrt{\lambda_0}}\Bigr)^2
 \Bigl(\frac{1620M^2n}{\delta\kappa^2\lambda_0}\Bigr)^2
 \le\frac{8.51\cdot10^8\,M^6n^3R_0^2}{\kappa^6\alpha^2\lambda_0^3\delta^2}.
 \label{eq:appendix_large_move_count}
\end{equation}
Therefore
\begin{align}
 \frac1m\sum_{i=1}^n|\mathcal A_i(W,V)|
 &\le\frac1m\sum_{i=1}^n|\mathcal S_i|
   +\frac nm\bigl(|\mathcal I(W)|+|\mathcal I(V)|\bigr)
   \why{inclusion above}\nonumber\\
 &\le\frac{\kappa^2\lambda_0}{648M^2}
   +\frac{1.71\cdot10^9\,M^6n^4R_0^2}{m\,\kappa^6\alpha^2\lambda_0^3\delta^2}
   \why{on $\mathcal E_3$; by \eqref{eq:appendix_large_move_count}}\nonumber\\
 &\le\frac{\kappa^2\lambda_0}{324M^2}
   \why{width \eqref{eq:muon_width}, $R_0^2\le\tfrac{3\alpha^2}{\delta}\bigl(n+\tfrac{\|y\|_2^2}{\alpha^2}\bigr)$}.
 \label{eq:appendix_bad_gate_average}
\end{align}
The last inequality uses \eqref{eq:muon_width} together with
$648\cdot3\cdot1.71\cdot10^9<3.4\cdot10^{12}\le5\cdot10^{12}$.

\noindent\textbf{Part 2.}
Take $V=W_0$. A block of $J(W)-J(W_0)$ is nonzero only when the gate flips,
and then has squared norm $\alpha^2/m$. Hence
\begin{equation}
 \|J(W)-J(W_0)\|_F^2
 \le\frac{\alpha^2}{m}\sum_{i=1}^n|\mathcal A_i(W,W_0)|
 \le\frac{\kappa^2\alpha^2\lambda_0}{324M^2}
 \label{eq:appendix_jacobian_difference}
\end{equation}
by \eqref{eq:appendix_bad_gate_average}, which is \eqref{eq:muon_jacobian_close}.
For every $q\in\mathbb R^n$,
\begin{align}
 \|J(W)^\top q\|_2
 &\ge\|J(W_0)^\top q\|_2-\|J(W)-J(W_0)\|_2\|q\|_2
   \why{reverse triangle inequality}\nonumber\\
  &\ge\sqrt{q^\top H(W_0)q}-\frac{\kappa\alpha\sqrt{\lambda_0}}{18M}\|q\|_2
   \why{by \eqref{eq:appendix_jacobian_difference}}\nonumber\\
 &\ge\Bigl(\frac{\sqrt3}{2}-\frac1{18}\Bigr)\alpha\sqrt{\lambda_0}\,\|q\|_2
   \why{on $\mathcal E_2$; $\kappa\le M$}.
 \label{eq:appendix_ball_singular_value}
\end{align}
Here $\mathcal E_2$ gives $\lambda_{\min}(H(W_0))\ge\tfrac34\alpha^2\lambda_0$.
Since $(\sqrt3/2-1/18)^2>1/2$, squaring
\eqref{eq:appendix_ball_singular_value} gives \eqref{eq:muon_kernel_lower}.

\noindent\textbf{Part 3.}
Set
\begin{equation}
 \xi:=u(V)-u(W)-J(W)\operatorname{vec}(V-W),
 \qquad
 \xi_i=\frac1{\sqrt m}\sum_{r=1}^m a_r\,\rho\bigl(w_r^\top x_i,(v_r-w_r)^\top x_i\bigr),
 \label{eq:appendix_remainder_definition}
\end{equation}
where $\rho(z,h):=\sigma(z+h)-\sigma(z)-\mathds1\{z\ge0\}h$. The scalar
remainder satisfies $|\rho(z,h)|\le|h|$, and $\rho(z,h)=0$ unless the gates
at $z$ and $z+h$ differ. If neither $W$ nor $V$ flips a gate relative to
$W_0$, the gates at $W$ and $V$ agree, so only $r\in\mathcal A_i(W,V)$
contributes. Therefore
\begin{align}
 |\xi_i|
 &\le\frac{\alpha}{\sqrt m}\sum_{r\in\mathcal A_i(W,V)}|(v_r-w_r)^\top x_i|
   \why{$|a_r|=\alpha$, $|\rho(z,h)|\le|h|$}\nonumber\\
 &\le\alpha\sqrt{\frac{|\mathcal A_i(W,V)|}{m}}\,\|(V-W)^\top x_i\|_2
   \why{Cauchy--Schwarz over $r$}\nonumber\\
 &\le\alpha\sqrt{\frac{|\mathcal A_i(W,V)|}{m}}\,\|V-W\|_2
   \why{$\|x_i\|_2=1$}.
 \label{eq:appendix_coordinate_remainder}
\end{align}
Squaring and summing over $i$,
\begin{align}
 \|\xi\|_2^2
 &\le\alpha^2\|V-W\|_2^2\cdot\frac1m\sum_{i=1}^n|\mathcal A_i(W,V)|
   \nonumber\\
 &\le\frac{\kappa^2\alpha^2\lambda_0}{324M^2}\,\|V-W\|_2^2
   \why{by \eqref{eq:appendix_bad_gate_average}},
 \label{eq:appendix_vector_remainder}
\end{align}
which is \eqref{eq:muon_remainder}.

\noindent\textbf{Part 4.}
Write $o_r$ for the $r$-th column of $O$. The $i$-th entry of
$(J(W)-J(V))\operatorname{vec}(O)$ is
$\frac1{\sqrt m}\sum_ra_r\bigl(\mathds1\{w_r^\top x_i\ge0\}-\mathds1\{v_r^\top x_i\ge0\}\bigr)o_r^\top x_i$,
and as in Part 3 only $r\in\mathcal A_i(W,V)$ contribute. Hence
\begin{align}
 \|(J(W)-J(V))\operatorname{vec}(O)\|_2^2
 &\le\sum_{i=1}^n\alpha^2\frac{|\mathcal A_i(W,V)|}{m}\,\|O^\top x_i\|_2^2
   \why{$|a_r|=\alpha$, Cauchy--Schwarz over $r$}\nonumber\\
 &\le\frac{\kappa^2\alpha^2\lambda_0}{324M^2}\,\|O\|_2^2
   \why{$\|x_i\|_2=1$, \eqref{eq:appendix_bad_gate_average}},
 \label{eq:appendix_jacobian_action}
\end{align}
which is \eqref{eq:muon_jacobian_action}.
\end{proof}

\subsection{Newton--Schulz steps}\label{app:ns}

\begin{lemma}[Newton--Schulz matrix bounds]\label{lem:ns5}
Under Assumption~\ref{ass:ns}, every matrix $B$ whose column space lies in
$\operatorname{span}\{x_1,\ldots,x_n\}$ satisfies
\[
 \langle B,\mathrm{NS}_K(B)\rangle\ge\kappa\|B\|_F,\qquad
 \|\mathrm{NS}_K(B)\|_2\le M,\qquad
 \|\mathrm{NS}_K(B)\|_F\le M\sqrt n.
\]
\end{lemma}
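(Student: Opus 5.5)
The plan is to work entirely through the compact singular value decomposition and the scalar function $\Phi$. If $B=0$, then $\mathrm{NS}_K(0)=0$ and the three claims are trivial. The alignment claim $0\ge 0$ holds, and the norm claims hold since $M>0$. So assume $B\neq0$ and write $B=U\Sigma V^\top$ with $\Sigma=\operatorname{diag}(\sigma_1,\ldots,\sigma_r)$, $\sigma_j>0$, and $r=\operatorname{rank}(B)$. Setting $s_j=\sigma_j/\|B\|_F$, we have $\sum_j s_j^2=1$, so every $s_j\in(0,1]$ and Assumption~\ref{ass:ns} applies to each of them. I would first confirm that the recurrence \eqref{eq:ns_iteration} preserves singular vectors. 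If $X_k=U\,\operatorname{diag}(t_j)V^\top$, then $X_kX_k^\top=U\operatorname{diag}(t_j^2)U^\top$, so $p_k(X_kX_k^\top)X_k=U\operatorname{diag}(t_jp_k(t_j^2))V^\top=U\operatorname{diag}(\phi_k(t_j))V^\top$. Induction then gives the displayed formula $\mathrm{NS}_K(B)=U\Phi(\Sigma/\|B\|_F)V^\top$. This identity is already recorded in the paper, so it can be cited rather than reproved.

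\textbf{Alignment.} Using orthonormality of the columns of $U$ and $V$,
\[
\langle B,\mathrm{NS}_K(B)\rangle=\operatorname{tr}\bigl(V\Sigma U^\top U\Phi(\Sigma/\|B\|_F)V^\top\bigr)=\sum_{j=1}^r\sigma_j\Phi(s_j).
\]
The lower bound $\Phi(s_j)\ge\kappa s_j$ makes this at least $\kappa\sum_j\sigma_j^2/\|B\|_F=\kappa\|B\|_F$. The one subtlety is the sign of $\Phi(s_j)$, but the lower bound makes each term nonnegative, so no cancellation occurs.

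\textbf{Spectral norm.} The singular values of $\mathrm{NS}_K(B)$ are $|\Phi(s_j)|$. The assumption gives $0<\kappa s_j\le\Phi(s_j)\le M$, so $\|\mathrm{NS}_K(B)\|_2=\max_j\Phi(s_j)\le M$.

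\textbf{Frobenius norm.} We have $\|\mathrm{NS}_K(B)\|_F^2=\sum_{j=1}^r\Phi(s_j)^2\le rM^2$. The column space of $B$ lies in $\operatorname{span}\{x_1,\ldots,x_n\}$, which has dimension at most $n$, so $r=\operatorname{rank}(B)\le n$. This yields $\|\mathrm{NS}_K(B)\|_F\le M\sqrt n$.

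The only step needing care is checking that the compact SVD representation is exactly the output of the iteration, including that $\Phi$ acts on normalized singular values lying in $(0,1]$. That is immediate from the Frobenius normalization $X_0=B/\|B\|_F$. The rest is a direct computation.
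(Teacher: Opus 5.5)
Your proposal is correct and follows the paper's proof essentially step for step. Both use the compact SVD and the preservation of singular vectors, which gives $\mathrm{NS}_K(B)=U\,\Phi(\Sigma/\|B\|_F)V^\top$ with normalized singular values in $(0,1]$. Both then use the trace identity $\langle B,\mathrm{NS}_K(B)\rangle=\sum_j\sigma_j\Phi(s_j)\ge\kappa\|B\|_F$, the pointwise bound $\Phi\le M$ for the spectral norm, and $\operatorname{rank}(B)\le n$ for the Frobenius bound.
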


\begin{proof}[Proof of Lemma~\ref{lem:ns5}]
By Assumption~\ref{ass:ns},
\begin{equation}
 \kappa x\le\Phi(x)\le M,\qquad x\in[0,1].
 \label{eq:appendix_scalar_ns_bounds}
\end{equation}
The case $B=0$ holds by the convention $\mathrm{NS}_K(0)=0$. Otherwise let
$B=U\Sigma V^\top$ be a compact SVD with positive singular values
$s_1,\ldots,s_k$. Each step of \eqref{eq:ns_iteration} keeps $U,V$ and
applies $\phi_k$ to the diagonal, so
\begin{equation}
 \mathrm{NS}_K(B)
 =U\operatorname{diag}\Bigl(
 \Phi\Bigl(\frac{s_1}{\|B\|_F}\Bigr),\ldots,
 \Phi\Bigl(\frac{s_k}{\|B\|_F}\Bigr)\Bigr)V^\top,
 \qquad \frac{s_j}{\|B\|_F}\in(0,1].
 \label{eq:appendix_ns_svd}
\end{equation}
By \eqref{eq:appendix_scalar_ns_bounds} the diagonal entries lie in $[0,M]$,
so $\|\mathrm{NS}_K(B)\|_2\le M$. The column space of $B$ lies in
$\operatorname{span}\{x_1,\ldots,x_n\}$, so $k\le n$ and
$\|\mathrm{NS}_K(B)\|_F\le M\sqrt k\le M\sqrt n$. Finally,
\begin{align}
 \langle B,\mathrm{NS}_K(B)\rangle
 &=\sum_{j=1}^k s_j\,\Phi\Bigl(\frac{s_j}{\|B\|_F}\Bigr)
   \why{shared singular vectors in \eqref{eq:appendix_ns_svd}}\nonumber\\
 &\ge\frac{\kappa}{\|B\|_F}\sum_{j=1}^k s_j^2
   \why{lower bound in \eqref{eq:appendix_scalar_ns_bounds}}\nonumber\\
 &=\kappa\|B\|_F
   \why{$\sum_j s_j^2=\|B\|_F^2$}.
 \label{eq:appendix_ns_alignment}
\end{align}
\end{proof}

\subsection{One-step descent}\label{app:step}

\begin{proof}[Proof of Lemma~\ref{lem:muon_step}]
Every buffer column lies in $\operatorname{span}\{x_1,\ldots,x_n\}$, so
Lemma~\ref{lem:ns5} applies to $B_t$, and $W_t-W_{t-1}=-\eta O_t$. Part 1
bounds how far the predictions move; Part 2 expands the new residual and
proves \eqref{eq:muon_squared_descent}; Part 3 converts it into
\eqref{eq:muon_linear_descent}.

\noindent\textbf{Part 1.}
For each $i$,
\begin{align}
 |u(W_t)_i-u(W_{t-1})_i|
 &\le\frac{\alpha\eta}{\sqrt m}\sum_{r=1}^m|(O_t)_{\cdot,r}^\top x_i|
   \why{$\sigma$ is $1$-Lipschitz, $|a_r|=\alpha$}\nonumber\\
 &\le\alpha\eta\,\|O_t^\top x_i\|_2
   \why{Cauchy--Schwarz over $r$}\nonumber\\
 &\le M\alpha\eta
   \why{$\|x_i\|_2=1$, Lemma~\ref{lem:ns5}},
 \label{eq:appendix_prediction_coordinate}
\end{align}
so $\|u(W_t)-u(W_{t-1})\|_2\le\sqrt n\,M\alpha\eta$. This bound holds at every
step, whether or not the iterates lie in $\mathbb B$.

\noindent\textbf{Part 2.}
By hypothesis \eqref{eq:muon_stale_condition} and the kernel bound
\eqref{eq:muon_kernel_lower},
\begin{equation}
 \langle G_t,O_t\rangle\ge\frac\kappa2\|G_t\|_F
 =\frac\kappa2\bigl(e_{t-1}^\top H(W_{t-1})e_{t-1}\bigr)^{1/2}
 \ge\frac{\kappa\alpha\sqrt{\lambda_0}}{2\sqrt2}\,\|e_{t-1}\|_2.
 \label{eq:appendix_current_alignment}
\end{equation}
Let $\xi:=u(W_t)-u(W_{t-1})+\eta J(W_{t-1})\operatorname{vec}(O_t)$ be the
linearization remainder of the step. Both iterates lie in $\mathbb B$, so
\eqref{eq:muon_remainder} and Lemma~\ref{lem:ns5} give
$\|\xi\|_2\le\frac{\kappa\alpha\sqrt{\lambda_0}}{18M}\cdot M\eta
=\frac{\kappa\alpha\eta\sqrt{\lambda_0}}{18}$. Moreover,
\begin{equation}
 -\eta\,e_{t-1}^\top J(W_{t-1})\operatorname{vec}(O_t)
 =-\eta\,(J(W_{t-1})^\top e_{t-1})^\top\operatorname{vec}(O_t)
 =\eta\langle G_t,O_t\rangle.
 \label{eq:appendix_linear_gain_identity}
\end{equation}
Since $e_t=e_{t-1}-(u(W_t)-u(W_{t-1}))$,
\begin{align}
 \|e_t\|_2^2
 &=\|e_{t-1}\|_2^2-2\eta\langle G_t,O_t\rangle-2e_{t-1}^\top\xi
   +\|u(W_t)-u(W_{t-1})\|_2^2
   \why{expand, \eqref{eq:appendix_linear_gain_identity}}\nonumber\\
 &\le\|e_{t-1}\|_2^2
   -\Bigl(\frac1{\sqrt2}-\frac2{18}\Bigr)\kappa\alpha\eta\sqrt{\lambda_0}\,\|e_{t-1}\|_2
   +nM^2\alpha^2\eta^2
   \why{\eqref{eq:appendix_current_alignment}, Part~1, Cauchy--Schwarz}\nonumber\\
 &\le\|e_{t-1}\|_2^2
   -\frac{\kappa\alpha\eta\sqrt{\lambda_0}}{4}\|e_{t-1}\|_2
   +nM^2\alpha^2\eta^2
   \why{$\tfrac1{\sqrt2}-\tfrac2{18}>\tfrac14$}.
 \label{eq:appendix_squared_descent}
\end{align}
This is \eqref{eq:muon_squared_descent}.

\noindent\textbf{Part 3.}
Assume \eqref{eq:muon_local_stepsize}; since $\eta>0$ it forces
$\|e_{t-1}\|_2>0$. Multiplying \eqref{eq:muon_local_stepsize} by
$nM^2\alpha^2\eta$ gives
$nM^2\alpha^2\eta^2\le\frac{\kappa\alpha\eta\sqrt{\lambda_0}}{8}\|e_{t-1}\|_2$, so
\begin{align}
 \|e_t\|_2^2
 &\le\|e_{t-1}\|_2^2-\frac{\kappa\alpha\eta\sqrt{\lambda_0}}{8}\|e_{t-1}\|_2
   \why{by \eqref{eq:appendix_squared_descent}}\nonumber\\
 &=\|e_{t-1}\|_2^2\Bigl(1-\frac{\kappa\alpha\eta\sqrt{\lambda_0}}{8\|e_{t-1}\|_2}\Bigr).
 \label{eq:appendix_before_square_root}
\end{align}
The left side is nonnegative, so the bracket lies in $[0,1]$, and
\begin{align}
 \|e_t\|_2
 &\le\|e_{t-1}\|_2\Bigl(1-\frac{\kappa\alpha\eta\sqrt{\lambda_0}}{16\|e_{t-1}\|_2}\Bigr)
   \why{$\sqrt{1-z}\le1-z/2$ on $[0,1]$}\nonumber\\
 &=\|e_{t-1}\|_2-\frac{\kappa\alpha\eta\sqrt{\lambda_0}}{16},
 \label{eq:appendix_linear_descent}
\end{align}
which is \eqref{eq:muon_linear_descent}.
\end{proof}

\subsection{Momentum}\label{app:momentum}

\begin{proof}[Proof of Lemma~\ref{lem:muon_momentum}]
Write $J_k:=J(W_k)$. Unrolling $B_t=\mu B_{t-1}+G_t$ down to $B_0=0$ splits the
buffer into the current gradient and the change of the older ones,
\begin{equation}
 B_t=S\,G_t+D,\qquad
 S:=\sum_{j=0}^{t-1}\mu^j\ge1,\qquad
 D:=\sum_{j=1}^{t-1}\mu^j\bigl(G_{t-j}-G_t\bigr).
 \label{eq:appendix_buffer_split}
\end{equation}
Part 1 reduces the claim to a bound on $D$, Part 2 bounds one term of $D$, and
Part 3 adds the terms up.

\noindent\textbf{Part 1.}
The column space of $B_t$ lies in $\operatorname{span}\{x_1,\ldots,x_n\}$, so
\begin{align}
 S\langle G_t,O_t\rangle
 &=\langle B_t,O_t\rangle-\langle D,O_t\rangle
   \why{by \eqref{eq:appendix_buffer_split}}\nonumber\\
 &\ge\kappa\|B_t\|_F-|\langle D,O_t\rangle|
   \why{Lemma~\ref{lem:ns5}}\nonumber\\
 &\ge\kappa S\|G_t\|_F-\kappa\|D\|_F-|\langle D,O_t\rangle|
   \why{triangle inequality}.
 \label{eq:appendix_momentum_reduction}
\end{align}
It therefore suffices to show
\begin{equation}
 \kappa\|D\|_F+|\langle D,O_t\rangle|\le\frac\kappa2\,S\|G_t\|_F.
 \label{eq:appendix_momentum_target}
\end{equation}

\noindent\textbf{Part 2.}
Fix $1\le j\le t-1$ and put $s=t-j$. By the gradient identity,
$\operatorname{vec}(G_s-G_t)
=-J_{s-1}^\top(e_{s-1}-e_{t-1})-(J_{s-1}-J_{t-1})^\top e_{t-1}$. Two facts
control the first term. By Part~1 of the proof of Lemma~\ref{lem:muon_step},
the residual moves by at most $\sqrt n\,M\alpha\eta$ per step, so
$\|e_{s-1}-e_{t-1}\|_2\le j\sqrt n\,M\alpha\eta$. And for every $W$ and every
matrix $O$, each entry of $J(W)\operatorname{vec}(O)$ is at most
$\alpha\|O^\top x_i\|_2\le\alpha\|O\|_2$ in magnitude, by Cauchy--Schwarz over
the hidden units, so $\|J(W)\operatorname{vec}(O)\|_2\le\alpha\sqrt n\,\|O\|_2$
and $\|J(W)\|_2\le\alpha\sqrt n$. Since $W_{s-1},W_{t-1}\in\mathbb B$,
\begin{align}
 \|G_s-G_t\|_F
 &\le\|J_{s-1}\|_2\|e_{s-1}-e_{t-1}\|_2+\|J_{s-1}-J_{t-1}\|_2\|e_{t-1}\|_2
   \why{triangle inequality}\nonumber\\
 &\le jnM\alpha^2\eta+\frac{2\kappa\alpha\sqrt{\lambda_0}}{18M}\|e_{t-1}\|_2
   \why{facts above, \eqref{eq:muon_jacobian_close} twice},
 \label{eq:appendix_gradient_change_norm}\\
 |\langle G_s-G_t,O_t\rangle|
 &\le\|e_{s-1}-e_{t-1}\|_2\|J_{s-1}\operatorname{vec}(O_t)\|_2\nonumber\\
 &\quad+\|e_{t-1}\|_2\|(J_{s-1}-J_{t-1})\operatorname{vec}(O_t)\|_2
   \why{Cauchy--Schwarz}\nonumber\\
 &\le jnM^2\alpha^2\eta+\frac{\kappa\alpha\sqrt{\lambda_0}}{18}\|e_{t-1}\|_2
   \why{facts above, \eqref{eq:muon_jacobian_action}, $\|O_t\|_2\le M$}.
 \label{eq:appendix_gradient_change_inner}
\end{align}
Since $\kappa\le M$, the two bounds combine to
\begin{equation}
 \kappa\|G_s-G_t\|_F+|\langle G_s-G_t,O_t\rangle|
 \le2jnM^2\alpha^2\eta+\frac{3\kappa\alpha\sqrt{\lambda_0}}{18}\|e_{t-1}\|_2.
 \label{eq:appendix_gradient_change}
\end{equation}

\noindent\textbf{Part 3.}
Exchanging the order of summation and using $\sum_{k=0}^{t-1-i}\mu^k\le S$,
\begin{equation}
 \sum_{j=1}^{t-1}j\mu^j
 =\sum_{i=1}^{t-1}\sum_{j=i}^{t-1}\mu^j
 =\sum_{i=1}^{t-1}\mu^i\sum_{k=0}^{t-1-i}\mu^k
 \le S\sum_{i=1}^{t-1}\mu^i
 \le\frac{S\mu}{1-\mu},
 \label{eq:appendix_weighted_j}
\end{equation}
and also $\sum_{j=1}^{t-1}\mu^j\le S$. Summing \eqref{eq:appendix_gradient_change} with
weights $\mu^j$,
\begin{align}
 \kappa\|D\|_F+|\langle D,O_t\rangle|
 &\le S\Bigl(\frac{2\mu}{1-\mu}\,nM^2\alpha^2\eta
   +\frac{3\kappa\alpha\sqrt{\lambda_0}}{18}\|e_{t-1}\|_2\Bigr)
   \why{triangle inequality, weight sums}\nonumber\\
 &\le S\Bigl(\frac\kappa4\,\alpha\sqrt{\lambda_0\varepsilon}
   +\frac{3\kappa\alpha\sqrt{\lambda_0}}{18}\|e_{t-1}\|_2\Bigr)
   \why{bound on $\eta$, $\mu\le1$}\nonumber\\
 &\le\frac\kappa2\,S\|G_t\|_F
   \why{see below}.
 \label{eq:appendix_momentum_sum}
\end{align}
The last line uses the kernel bound \eqref{eq:muon_kernel_lower} at $W_{t-1}$
and $\|e_{t-1}\|_2>\sqrt{2\varepsilon}$, which give
$\|G_t\|_F\ge\alpha\sqrt{\lambda_0/2}\,\|e_{t-1}\|_2>\alpha\sqrt{\lambda_0\varepsilon}$,
together with $\frac3{18}\le\frac1{4\sqrt2}$; each of the two terms is then at
most $\frac\kappa4\|G_t\|_F$. This is \eqref{eq:appendix_momentum_target}, and
with \eqref{eq:appendix_momentum_reduction} it gives
$\langle G_t,O_t\rangle\ge\kappa\|G_t\|_F-\frac\kappa2\|G_t\|_F$. For $t=1$
the sums are empty, $D=0$, and the claim is Lemma~\ref{lem:ns5} directly.
\end{proof}

\subsection{Constants of Muon's coefficients and of the classical polynomials}
\label{app:jordan_coefficients}

The classical schedules used below are defined by the Taylor truncations
\begin{equation}
 p^{(d)}(\lambda)=\sum_{s=0}^{d}\frac{(2s)!}{4^s(s!)^2}(1-\lambda)^s,
 \qquad \phi^{(d)}(x)=x p^{(d)}(x^2).
 \label{eq:classical_ns}
\end{equation}
Here $d=1$ gives the cubic scalar iteration and $d=2$ the quintic.
The tuned map instead uses the coefficients specified in the main text.

\begin{proposition}\label{prop:jordan_constants}
For $p(\lambda)=a+b\lambda+c\lambda^2$ with $(a,b,c)=(3.4445,-4.7750,2.0315)$,
$\phi(x)=x\,p(x^2)$ and $\Phi=\phi^{\circ5}$,
\begin{equation}
 \Phi(x)\le\frac54\quad\text{and}\quad\Phi(x)\ge\frac23x\quad\text{for }x\in[0,1],
 \qquad
 \Phi(x)\ge0.68\quad\text{for }x\in\Bigl[\frac1{400},1\Bigr].
 \label{eq:appendix_jordan_claims}
\end{equation}
\end{proposition}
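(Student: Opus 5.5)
The approach is to analyze the scalar map $\phi(x)=x(a+bx^2+cx^4)$ on $[0,\infty)$ and track where the orbit of each $x\in[0,1]$ lands after five iterations. Each claim then reduces to a finite interval computation, which can be certified by monotonicity or by interval arithmetic.

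\textbf{Step 1: shape of $\phi$.} First record the basic facts.
\begin{itemize}
\item $\phi(0)=0$ and $\phi'(0)=a=3.4445$.
\item $\phi'(x)=a+3bx^2+5cx^4$. Its two positive critical points solve $5c\,y^2+3b\,y+a=0$ with $y=x^2$. Numerically the smaller is near $x_1\approx0.56$, where $\phi$ attains a local maximum $\phi(x_1)\approx1.20$. The larger is near $x_2\approx0.87$, where $\phi$ attains a local minimum $\phi(x_2)\approx0.68$.
\item $\phi(1)=a+b+c=0.701$.
\end{itemize}
From these facts, the interval $I:=[0.68,\,1.2]$ (with endpoints made exact by computed constants) is mapped into itself. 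On $[x_1,\infty)\cap I$, $\phi$ first decreases to $\phi(x_2)$ and then increases. At the upper end, $\phi(1.2)$ must be checked to stay $\le1.2$; a direct evaluation gives $\phi(1.2)\approx 0.94$. The key invariant is therefore: \emph{once an iterate enters $J:=[\phi(x_2),\phi(x_1)]$, all later iterates stay in $J$}. This follows because $\phi(J)\subseteq J$, checked by evaluating $\phi$ at the endpoints and critical points inside $J$.

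\textbf{Step 2: the lower bound $\Phi(x)\ge\tfrac23x$.} Split $[0,1]$ into the regions $x\le x_*$ and $x> x_*$, with $x_*$ chosen small (say $x_*\approx0.1$).
\begin{itemize}
\item \emph{Small $x$.} Use $\phi(x)\ge x(a+bx^2)\ge \beta x$ with $\beta>1$ on $[0,x_1]$. On $[0,x_1]$, $\phi$ is increasing and $\phi(x)\ge x$ (indeed $\phi(x)/x=p(x^2)\ge p(x_1^2)>1$). Hence an iterate below $x_1$ stays nondecreasing, and it can only leave $[0,x_1]$ by jumping into $(x_1,\phi(x_1)]\subseteq$ a region from which the next image lies in $J$. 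In every case $\phi^{\circ k}(x)\ge x$ whenever the orbit stays below $x_1$. Once an iterate exceeds $x_1$, the subsequent values are $\ge\min(\phi(x_2),\phi(x_1\text{-side}))\ge0.68\ge\tfrac23\ge\tfrac23x$.
\item \emph{Orbits staying below $x_1$ for all five steps.} Here $\Phi(x)\ge x\ge\tfrac23x$ directly.
\end{itemize}
This gives $\Phi(x)\ge\tfrac23 x$ on all of $[0,1]$.

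\textbf{Step 3: the remaining two claims.} The upper bound $\Phi\le 5/4$ follows because every orbit starting in $[0,1]$ remains in $[0,\phi(x_1)]$, and $\phi(x_1)<1.25$. To see this, note that $\phi$ maps $[0,1]$ into $[0,\phi(x_1)]$, since $\phi$ on $[0,1]$ attains its maximum at $x_1$. It also maps $[0,1.2]$ into $[0,\phi(x_1)]$, using $\phi(1.2)<\phi(x_1)$ and $\phi\ge0$ there.

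For the third claim, take $x\ge1/400$ and use $\phi(x)\ge\beta' x$ on the small range, with $\beta'=p(x_1^2)$ or a sharper bound. Tracking the orbit shows that $\phi^{\circ k}(1/400)$ enters $[x_1,1.2]$ within four steps: roughly $1/400\to0.0086\to0.0296\to0.1\to0.33\to\cdots$. This needs care, because four steps of factor about $3.4$ only reach about $0.33$, and only the fifth step gives about $0.9$. So the argument should instead show directly that $\Phi$ is $\ge0.68$ on the preimage structure. Specifically:
\begin{itemize}
\item The increasing branch maps $[1/400,x_1]$ monotonically.
\item $\phi^{\circ5}(1/400)\ge0.68$ is verified numerically; this is the tightest point.
\item Points whose orbit passes $x_1$ earlier land in $J$.
\end{itemize}

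\textbf{Main obstacle.} The expected difficulty is making Steps 2–3 rigorous despite non-monotonicity: orbits can cross $x_1$ at different iterations. I would handle this by an explicit case split on the first index $k$ at which the iterate exceeds $x_1$, combined with rigorous (interval-arithmetic) evaluation of $\phi$ at a few exact rational points. The claim $\phi^{\circ5}(1/400)\ge0.68$ is numerically tight, so I would verify it with exact rational arithmetic rather than floating point.
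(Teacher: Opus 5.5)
Your architecture matches the paper's proof. All iterates stay in a bounded interval controlled by the local maximum $\phi(x_1)\approx1.2024$. If some $\phi^{\circ k}(x)$ with $k\le4$ reaches the local maximizer $x_1\approx0.5545$, the orbit is thereafter trapped in $[\phi(x_2),\phi(x_1)]$. Otherwise $\Phi(x)/x=\prod_{k=0}^{4}p\bigl((\phi^{\circ k}(x))^2\bigr)\ge p(x_1^2)^5\ge1$. For $x\ge1/400$ you compare with the orbit of $1/400$, using that $\phi$ increases on $[0,x_1]$. That logic is sound, and your case split on the first index at which the orbit passes $x_1$ is the paper's. But the proposition is purely a numerical verification, and several of your numerical inputs are wrong in ways that matter.

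\textbf{The second critical point.} It is $x_2\approx1.0501$ (with $x_2^2\approx1.1028$), not $0.87$. In fact $\phi'(0.87)\approx-1.58$ and $\phi(0.87)\approx0.865$, and $\phi$ is still decreasing at $1$ since $\phi'(1)=a+3b+5c=-0.723$. Read literally, your $J=[\phi(x_2),\phi(x_1)]$ with $x_2=0.87$ is $[0.865,1.2024]$. That interval is not invariant: $1.05\in J$ but $\phi(1.05)\approx0.6818$. Your picture of $\phi$ increasing on $[0.87,1.2]$ is also false. With the correct $x_2$, $\phi$ decreases on $[x_1,x_2]$ and increases on $[x_2,\tfrac54]$. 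Hence $\phi([x_1,\tfrac54])\subseteq[\phi(x_2),\phi(x_1)]\approx[0.6818,1.2024]\subseteq[x_1,\tfrac54]$, which is the invariant you need.

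\textbf{The tight point.} The orbit of $1/400$ is $0.0086,0.0296,0.1020,0.3464,1.0048$, so $\phi^{\circ5}(1/400)\ge0.68$ has a wide margin. The fourth iterate $0.3464<x_1$ is what makes the monotone comparison legitimate. The delicate constant is $\phi(x_2)\approx0.6818$, which exceeds $0.68$ by only about $0.0018$. That is the value to certify with rigorous arithmetic, and it also gives $\Phi(x)\ge\tfrac23\ge\tfrac23x$ in the crossing case.

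\textbf{The invariant range.} Since $\phi(x_1)\approx1.2024>1.2$, checking $\phi$ on $[0,1.2]$ does not cover all iterates. Work on $[0,\tfrac54]$ instead, using $\phi(\tfrac54)\approx1.179<\phi(x_1)$ and $\phi\ge0$. The latter holds because $p\ge a-b^2/(4c)\approx0.6386>0$ everywhere. The same completing-the-square computation shows $p$ decreases on $[0,-b/(2c)]\supseteq[0,x_1^2]$, which justifies $p\bigl((\phi^{\circ k}(x))^2\bigr)\ge p(x_1^2)$.

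With these corrections your proof coincides with the paper's.
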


\begin{proof}
The proof uses four facts about the single quintic $\phi$, each an evaluation
of a polynomial. Let
$x_a\approx0.55453$ and $x_b\approx1.05014$ be the positive roots of
$\phi'(x)=a+3bx^2+5cx^4$.
\begin{enumerate}
 \item[(F1)] $p(z)\ge a-b^2/(4c)=0.6386\ldots>0$ for all $z$, and $p$ decreases
 on $[0,-b/(2c)]=[0,1.1752\ldots]$.
 \item[(F2)] $\phi$ increases on $[0,x_a]$, decreases on $[x_a,x_b]$ and
 increases on $[x_b,\frac54]$, with $\phi(x_a)=1.2023\ldots$,
 $\phi(x_b)=0.6818\ldots$ and $\phi(\frac54)=1.1790\ldots$.
 \item[(F3)] $p(x_a^2)=2.1682\ldots$.
 \item[(F4)] The first five iterates of $\frac1{400}$ under $\phi$ are
 $0.0086\ldots$, $0.0296\ldots$, $0.1020\ldots$, $0.3464\ldots$ and
 $1.0048\ldots$.
\end{enumerate}
By (F1) and (F2), $\phi$ maps $[0,\frac54]$ into $[0,1.2024]$ and maps
$[x_a,\frac54]$ into $[0.6818,1.2024]$. Fix $x\in(0,1]$ and let
$x_k=\phi^{\circ k}(x)$, so every $x_k$ lies in $[0,1.2024]$ and in particular
$\Phi(x)=x_5\le\frac54$.

\noindent\textbf{Part 1.}
Suppose $x_k\ge x_a$ for some $k\le4$. Then $x_k\in[x_a,\frac54]$, every later
iterate stays in $[0.6818,1.2024]\subseteq[x_a,\frac54]$, and so
$\Phi(x)\ge0.6818$. In particular $\Phi(x)\ge0.68$ and
$\Phi(x)\ge\frac23x$, since $x\le1$.

\noindent\textbf{Part 2.}
Suppose $x_k<x_a$ for every $k\le4$. Since $p$ decreases on $[0,x_a^2]$,
\begin{align}
 \frac{\Phi(x)}x
 &=\prod_{k=0}^4p(x_k^2)
   \why{$\phi(x)=x\,p(x^2)$}\nonumber\\
 &\ge p(x_a^2)^5
   \why{(F1), $x_k^2<x_a^2$}\nonumber\\
 &\ge1
   \why{(F3)},
 \label{eq:appendix_small_orbit}
\end{align}
so $\Phi(x)\ge\frac23x$. If moreover $x\ge\frac1{400}$, then, since $\phi$
increases on $[0,x_a]$, induction on $k$ gives
$x_k\ge\phi^{\circ k}(\frac1{400})$ for $k\le4$, and so
$\Phi(x)=\phi(x_4)\ge\phi\bigl(\phi^{\circ4}(\frac1{400})\bigr)=1.0048\ldots\ge0.68$
by (F4).

Parts~1 and~2 cover all $x\in(0,1]$ and prove \eqref{eq:appendix_jordan_claims};
$x=0$ is trivial.
\end{proof}

The first two claims give Assumption~\ref{ass:ns} for Muon's coefficients
with $\kappa=\frac23$ and $M=\frac54$, the values used in
Theorem~\ref{thm:main_muon}.
The third claim also yields a nuclear-norm alignment bound: if every nonzero singular value of $B$ is
at least $\|B\|_F/400$, then by \eqref{eq:appendix_ns_svd}
$\langle B,\mathrm{NS}_5(B)\rangle=\sum_js_j\Phi(s_j/\|B\|_F)\ge0.68\,\|B\|_*$.

\begin{proposition}[Classical Newton--Schulz polynomials]\label{prop:classical_ns}
Let $\Phi$ be the composition of $K$ classical polynomials
\eqref{eq:classical_ns} of degrees $d_0,\ldots,d_{K-1}\ge1$. For $x\in[0,1]$,
\begin{equation}
 x\le\Phi(x)\le1
 \qquad\text{and}\qquad
 1-\Phi(x)^2\le(1-x^2)^{\prod_k(d_k+1)}.
 \label{eq:appendix_classical_claims}
\end{equation}
Hence Assumption~\ref{ass:ns} holds with $\kappa=M=1$ for every degree and
every number of steps. For five steps of the same degree, $\Phi(x)\ge0.68$ on
$[\frac17,1]$ when $d=1$ and on $[\frac1{19},1]$ when $d=2$.
\end{proposition}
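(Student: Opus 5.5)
The statement to prove is Proposition~\ref{prop:classical_ns}. The plan is to work with the scalar map $\phi^{(d)}(x)=x\,p^{(d)}(x^2)$, where $p^{(d)}$ is the degree-$d$ Taylor truncation of $(1-(1-\lambda))^{-1/2}=\lambda^{-1/2}$ about $\lambda=1$. Write $z=1-x^2\in[0,1]$ and let $c_s=\binom{2s}{s}4^{-s}$, which are the Taylor coefficients of $(1-z)^{-1/2}=\sum_s c_s z^s$. All $c_s$ are positive.

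\textbf{Lower bound, single step.} Since $p^{(d)}(x^2)=\sum_{s\le d}c_s z^s\ge c_0=1$, we get $\phi^{(d)}(x)\ge x$.

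\textbf{Upper bound and error contraction, single step.} The key identity to establish is
\[
1-\phi^{(d)}(x)^2 = 1-(1-z)\Bigl(\sum_{s\le d}c_sz^s\Bigr)^2 = z^{d+1}\,q_d(z),
\]
where $q_d$ is a polynomial with nonnegative coefficients and $q_d(1)=1$. To obtain it, note that $(1-z)\bigl(\sum_{s\ge0}c_sz^s\bigr)^2=1$ exactly. Truncating the square to $P_d(z)=\sum_{s\le d}c_sz^s$ gives $(1-z)P_d^2=1-z^{d+1}q_d(z)$. The coefficient of $z^k$ in $(1-z)P_d^2$ vanishes for $1\le k\le d$, because there it agrees with the full series. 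For $k\ge d+1$ the coefficient equals $[z^k]P_d^2-[z^{k-1}]P_d^2$. Since the $c_s$ are positive and decreasing and $P_d^2$ is the truncation of the series for $(1-z)^{-1}$, the coefficients of $P_d^2$ are nonincreasing for $k\ge d$, so each such coefficient is $\le0$. This gives $q_d\ge0$ coefficientwise. Finally, $(1-z)P_d^2$ vanishes at $z=1$, which forces $q_d(1)=1$. This coefficient-monotonicity claim is the main obstacle; if the direct argument fails, I would instead verify it via the convolution $[z^k]P_d^2=\sum_{s+t=k,\,s,t\le d}c_sc_t$, which is visibly nonincreasing once $k>d$ because terms drop out faster than they are shifted.

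Given the identity, on $z\in[0,1]$ we have $0\le z^{d+1}q_d(z)\le z^{d+1}\le 1$. Hence $\phi^{(d)}(x)^2\le1$ and $\phi^{(d)}(x)\le1$, using that $\phi^{(d)}\ge0$. We also get $1-\phi^{(d)}(x)^2\le(1-x^2)^{d+1}$.

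\textbf{Composition.} Each step maps $[0,1]$ into $[0,1]$, so the single-step bounds iterate. The inequalities $x_{k+1}\ge x_k$ chain to $\Phi(x)\ge x$, and $x_K\le1$ gives $\Phi(x)\le1$. For the error, $1-x_{k+1}^2\le(1-x_k^2)^{d_k+1}$ composes to the exponent $\prod_k(d_k+1)$. Therefore Assumption~\ref{ass:ns} holds with $\kappa=M=1$.

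\textbf{Numerical claims.} For five equal-degree steps, $\Phi(x)\ge0.68$ holds whenever $1-\Phi^2\le1-0.68^2=0.5376$. Since $\Phi$ is monotone increasing (each $\phi^{(d)}$ is increasing on $[0,1]$, by the derivative of the nonnegative-coefficient form), it suffices to check the left endpoint. The product bound $(1-x^2)^{(d+1)^5}$ is $(48/49)^{32}\approx0.52$ for $d=1$, $x=1/7$, and $(360/361)^{243}\approx0.51$ for $d=2$, $x=1/19$. Both are below $0.5376$. If either of these comes out too tight, I would instead directly evaluate the five iterates from the endpoint, as was done in (F4).
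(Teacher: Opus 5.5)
Your proof is correct and has the same skeleton as the paper's: single-step bounds, composition over the $K$ steps, and evaluation of $(1-x^2)^{(d+1)^5}$ at the left endpoint. The difference lies in how the single-step upper bounds are obtained.

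\begin{itemize}
\item The paper gets $\phi^{(d)}(x)\le 1$ by noting that $p^{(d)}$ truncates a series with nonnegative terms, so $p^{(d)}(\lambda)\le\lambda^{-1/2}$. It then takes the contraction $1-\phi^{(d)}(x)^2\le(1-x^2)^{d+1}$ from Lemma~3 of Kim et al.\ without proof.
\item You derive both bounds from the single identity $1-(1-z)P_d(z)^2=z^{d+1}q_d(z)$, with $q_d$ coefficientwise nonnegative and $q_d(1)=1$. This makes the proposition self-contained. It also gives monotonicity of $\phi^{(d)}$ directly, since $\phi^2=1-z^{d+1}q_d(z)$ decreases in $z=1-x^2$.
\end{itemize}

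One justification needs fixing. $P_d^2$ is not a truncation of $\sum_k z^k$. Its coefficients $a_k$ equal $1$ only for $k\le d$, are strictly smaller for $d<k\le 2d$, and vanish beyond $2d$. Your convolution fallback is the right argument. For $k\ge d$, using $c_{s+1}/c_s=(2s+1)/(2s+2)<1$,
\[
a_{k+1}=\sum_{s=k+1-d}^{d}c_sc_{k+1-s}\le\sum_{s=k+1-d}^{d}c_sc_{k-s}\le\sum_{s=k-d}^{d}c_sc_{k-s}=a_k .
\]

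Monotonicity of $\Phi$ is not actually needed for the numerical claims. As in the paper, the explicit lower bound $\bigl(1-(1-x^2)^{(d+1)^5}\bigr)^{1/2}$ is itself increasing in $x$, so evaluating it at the left endpoint is enough.
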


\begin{proof}
\noindent\textbf{Part 1.}
For $\lambda\in(0,1]$, the Taylor series
$\lambda^{-1/2}=\sum_{s\ge0}\frac{(2s)!}{4^s(s!)^2}(1-\lambda)^s$ converges and
has nonnegative terms, and $p^{(d)}$ is its truncation after the term $s=d$.
Hence $1\le p^{(d)}(\lambda)\le\lambda^{-1/2}$, and with $\lambda=x^2$,
\begin{equation}
 x\le x\,p^{(d)}(x^2)=\phi(x)\le1,\qquad x\in(0,1].
 \label{eq:appendix_classical_step}
\end{equation}
So each step maps $[0,1]$ into itself without decreasing its argument, and
composing $K$ steps gives $x\le\Phi(x)\le1$.

\noindent\textbf{Part 2.}
For a classical polynomial of degree $d$ and $u\in[0,1]$,
$1-(1-u)\,p^{(d)}(1-u)^2\le u^{d+1}$ \citep[Lemma~3]{kim2026muonns}. With
$u=1-x^2$ this reads $1-\phi(x)^2\le(1-x^2)^{d+1}$. By Part~1 every iterate
stays in $[0,1]$, so applying this bound at each of the $K$ steps gives the
second claim of \eqref{eq:appendix_classical_claims}.

The bound $\Phi(x)\ge\bigl(1-(1-x^2)^{\prod_k(d_k+1)}\bigr)^{1/2}$ increases in
$x$. For $K=5$ and $d=1$ the exponent is $32$ and
$(48/49)^{32}<0.52$, so $\Phi\ge0.69$ on $[\frac17,1]$. For $d=2$ the
exponent is $243$ and $(360/361)^{243}<0.51$, so $\Phi\ge0.70$ on
$[\frac1{19},1]$.
\end{proof}

\section{Derivation of the infinite-width kernel}\label{app:kernel_derivation}

Let $w\sim\mathcal N(0,I_d)$. The pair $(w^\top x_i,w^\top x_j)$ is a centered
bivariate Gaussian with unit variances and correlation $x_i^\top x_j$. The two
half-space conditions intersect in an angular wedge of length
$\pi-\arccos(x_i^\top x_j)$, so
\begin{equation}
 \Pr(w^\top x_i\ge0,\ w^\top x_j\ge0)
 =\frac{\pi-\arccos(x_i^\top x_j)}{2\pi}.
 \label{eq:appendix_orthant_probability}
\end{equation}
Multiplying by $x_i^\top x_j$ gives \eqref{eq:muon_hinf}.

\section{Polar-Factor Optimality and Finite-Step Alignment}
\label{sec:assumption_reading}
\label{app:polar_optimality}

The exact polar factor maximizes alignment with a given buffer per unit
spectral norm. This property concerns the buffer; transfer to the current
gradient additionally requires Lemma~\ref{lem:muon_momentum}.

\begin{proposition}[Optimal buffer alignment per unit spectral norm]
\label{prop:polar_optimal}
Let $B\ne0$ have compact singular value decomposition $B=U\Sigma V^\top$.
For every nonzero matrix $O$ of the same dimensions,
\begin{equation}
 \frac{\langle B,O\rangle}{\|O\|_2}\le\|B\|_*,
 \label{eq:polar_optimal}
\end{equation}
with equality for $O=UV^\top$.
\end{proposition}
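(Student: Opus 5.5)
This is the duality between the spectral and nuclear norms, proved directly from the compact SVD. Write $B=U\Sigma V^\top=\sum_{j=1}^k s_j u_j v_j^\top$, where $s_j>0$ and $u_j,v_j$ are unit vectors (the columns of $U$ and $V$). Linearity of the trace inner product gives
\[
 \langle B,O\rangle=\sum_{j=1}^k s_j\,\langle u_jv_j^\top,O\rangle=\sum_{j=1}^k s_j\,u_j^\top O v_j .
\]
Each term satisfies $|u_j^\top O v_j|\le\|u_j\|_2\|O v_j\|_2\le\|O\|_2$ by Cauchy--Schwarz and the definition of the operator norm. Since $s_j>0$, we get $\langle B,O\rangle\le\|O\|_2\sum_j s_j=\|O\|_2\|B\|_*$. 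Dividing by $\|O\|_2>0$ gives \eqref{eq:polar_optimal}.

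For equality, take $O=UV^\top$. Its compact SVD has all singular values equal to one, so $\|O\|_2=1$; this uses $k\ge1$, which holds because $B\ne0$. Also
\[
 \langle B,UV^\top\rangle=\operatorname{tr}(V\Sigma U^\top UV^\top)=\operatorname{tr}(\Sigma V^\top V)=\operatorname{tr}\Sigma=\|B\|_*,
\]
using $U^\top U=V^\top V=I_k$ and the cyclicity of the trace.

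The only point needing care is the normalization $\|UV^\top\|_2=1$, which relies on the columns of $U$ and $V$ being orthonormal. The rest of the argument is routine.
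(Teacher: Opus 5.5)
Your proof is correct, and it reaches the inequality by a more elementary route than the paper. The paper invokes von Neumann's trace inequality, $\langle B,O\rangle\le\sum_j s_j(B)s_j(O)$, and then bounds each $s_j(O)$ by $\|O\|_2$. You instead decompose only $B$ into rank-one pieces and bound each bilinear form $u_j^\top Ov_j$ by Cauchy--Schwarz and the definition of the operator norm. You therefore never need the singular values of $O$, or the trace inequality itself, whose proof is not trivial.

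What your version buys is self-containment and a clean equality analysis. Since every $s_j>0$, tracing your two inequalities shows that equality in \eqref{eq:polar_optimal} holds exactly when $Ov_j=\|O\|_2u_j$ for every $j$. This characterizes all maximizers rather than only exhibiting $UV^\top$.

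What the paper's version buys is brevity, taking the trace inequality as a black box. It also passes through a finer intermediate bound that retains the whole singular spectrum of $O$ rather than only its largest singular value. The equality computation for $O=UV^\top$ is the same in both.
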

\begin{proof}
The trace inequality gives
$\langle B,O\rangle\le\sum_j s_j(B)s_j(O)
\le\|O\|_2\sum_j s_j(B)=\|O\|_2\|B\|_*$.
For $O=UV^\top$, $\|O\|_2=1$ and
$\langle B,O\rangle=\operatorname{tr}(\Sigma)=\|B\|_*$.
\end{proof}

Under Assumption~\ref{ass:ns}, the finite map satisfies
\[
 \frac{\langle B,\mathrm{NS}_K(B)\rangle}{\|\mathrm{NS}_K(B)\|_2}
 \ge\frac{\kappa}{M}\|B\|_F.
\]
This bound suffices for the training analysis without establishing a
uniform, rank-independent fraction of the polar optimum. If every nonzero
singular value of $B$ is at least $\|B\|_F/400$, then
Proposition~\ref{prop:jordan_constants} additionally gives
$\langle B,\mathrm{NS}_5(B)\rangle\ge0.68\|B\|_*$.
Together with $M\le5/4$, this implies an alignment-per-norm ratio of at
least $0.544\|B\|_*$. This stronger comparison requires the stated
spectral condition and is not assumed in the training theorem.

\section{Experimental Details and Additional Diagnostics}
\label{app:experimental_details}

This section collects the protocols and supporting diagnostics accompanying
Section~\ref{sec:experiments}. The finite-map comparison is interpreted
using Proposition~\ref{prop:polar_optimal}.

\subsection{Data, Initialization, and Numerical Settings}
\label{app:experiment_setup}

Inputs are sampled from a standard Gaussian distribution
in $\mathbb R^d$ and normalized individually to unit norm.
For the saved experiments, we use one dataset with seed 0
and an independent two-layer ReLU teacher with seed 1,
width 1000, and output weights sampled from $\{\pm1\}$.
The teacher output uses the $1/\sqrt{1000}$ normalization
in \eqref{eq:muon_network}. The teacher and labels remain
fixed across student initializations and widths.
Unless otherwise specified, $n=20$, $d=100$, $m=1000$,
$\alpha=1$, and $\delta=0.05$. Student initialization
and the unnormalized squared-loss convention follow
Section~\ref{sec:model}. The alignment and fixed-step
momentum experiments use student seeds 10, 11, and 12;
the width sweep uses seeds 10--14 at each width.

For the saved dataset, we compute $\lambda_0=0.335$
from \eqref{eq:muon_hinf} and $R_0=30.2$ from
\eqref{eq:muon_constants}. With $\kappa=2/3$,
$M=5/4$, and target $\varepsilon=10^{-2}$,
the prescribed learning rate is
$\eta=(1-\mu)\eta_0$, where
$\eta_0\approx1.54\times10^{-4}$.
The fixed-step comparison instead uses $\eta=\eta_0$
for every momentum value. The reference budget $T$
is computed from \eqref{eq:muon_budget}; practical
iteration caps are listed separately in
Table~\ref{tab:appendix_experiment_settings} and
are distinguished from $T$ in
Appendix~\ref{app:diagnostic_protocol}.
Figure~\ref{fig:scaling}(c,d) uses the saved seed-10
trajectories from the alignment experiment.

Figure~\ref{fig:scaling}(a,b) shows a separate accuracy
sweep on a different draw of the data, with $\lambda_0\approx0.34$,
$R_0\approx26$, and $\mu=0.95$.

The experiments use Python 3.14.6 and NumPy 2.5.3.
Newton--Schulz calculations use float64 arithmetic
and Frobenius normalization without a denominator
regularizer. Double precision reduces roundoff but
does not make the computation identical to the
exact-arithmetic map analyzed in the theorem.
The SVD routine, numerical-rank threshold, logging
schedule, and run horizons are specified in
Table~\ref{tab:appendix_experiment_settings}.

\begin{table}[t]
\centering
\caption{Experimental settings for the diagnostics in this appendix and
in Section~\ref{sec:experiments}. Learning rates use the measured
constants of the frozen draw, $\lambda_0=0.335$ and $R_0=30.2$.
Horizons are the iteration caps actually used, not the theoretical budget.}
\label{tab:appendix_experiment_settings}
\begin{tabular}{@{}p{0.34\linewidth}p{0.60\linewidth}@{}}
\toprule
Setting & Value or status\\
\midrule
Default data and student & $n=20$, $d=100$, $m=1000$, $\alpha=1$\\
Budget parameter & $\delta=0.05$\\
Accuracy panels & $\varepsilon\in\{10^{-1},10^{-2},10^{-3},10^{-4},10^{-5}\}$;
$\mu=0.95$; separate draw with $R_0\approx26$\\
Displayed momentum values & $\mu\in\{0,0.5,0.9,0.95,0.99\}$\\
Fixed-step comparison & Target $10^{-2}$; shared step $\eta=\eta_0=1.54\times10^{-4}$; horizon $50{,}000$\\
Teacher specification & Width $1000$, output weights $\pm 1$, seed $1$, labels scaled by $m^{-1/2}$\\
Repetitions & One dataset (seed $0$) and one teacher. Alignment and fixed-step: student seeds $10,11,12$. Width: seeds $10$--$14$\\
Alignment study & Target $10^{-2}$; $\mu\in\{0,0.5,0.9,0.95,0.99\}$; $\eta=(1-\mu)\eta_0$; horizons $70{,}700$ to $7{,}070{,}000$\\
Width study & $m\in\{64,128,256,512,1000,2000\}$; target $10^{-2}$; $\mu=0.95$; horizon $1{,}414{,}000$\\
Logging & $A_t,D_t,C_t$ every step. Kernel and displacement every step in the width sweep. Buffers: $12$ geometric steps plus the last, seed $10$ only ($59$ buffers)\\
SVD computation & \texttt{numpy.linalg.svd}; keep $s>\max(d,m)\,\varepsilon_{\mathrm{float64}}\,s_{\max}$. Discarded nuclear fraction $\le 8\times10^{-15}$\\
Implementation & Python $3.14.6$, NumPy $2.5.3$, float64. No denominator regularizer\\
Variability summaries & Each seed has equal weight. Bands and error bars are mean $\pm 1$ standard deviation across seeds. Extrema are over every logged iteration\\
\bottomrule
\end{tabular}
\end{table}

\subsection{Stopping, Logging, and Alignment Conventions}
\label{app:diagnostic_protocol}

For each run, we define
$t_{\mathrm{hit}}=\min\{t:L(W_t)\le\varepsilon\}$,
with $t_{\mathrm{hit}}=0$ if the initial iterate
already satisfies the target. Training terminates
at target attainment or the practical horizon $H$
listed in Table~\ref{tab:appendix_experiment_settings}.
A run that does not reach the target within $H$
is recorded as unsuccessful within that horizon;
it is not assigned a hitting time of $H$.
The practical cap is distinct from the theorem's
reference budget. For the saved dataset with
$\mu=0.95$ and $\varepsilon=10^{-2}$,
$T\approx1.62\times10^8$, whereas the practical
horizon is $H=1{,}414{,}000$.
The parameter $\delta$ specifies the theoretical
reference calculation, not an empirical confidence
interval or a guaranteed failure probability at
the tested widths.

The alignment quantities $A_t$ and $D_t$ in
\eqref{eq:alignment_diagnostics} are recorded at
every update, including the update that first
reaches the target. The quantity $A_t$ is not a
cosine similarity because it is normalized only
by the gradient norm. The normalization
$B_t/s_t$, with $s_t=(1-\mu^t)/(1-\mu)$,
removes the accumulation factor from the momentum
buffer. Consequently, $D_t=0$ whenever all stored
gradients coincide; in particular, $D_t=0$ for
$\mu=0$. Normalized diagnostics are defined only
when their denominators are nonzero. A zero
denominator is treated as an undefined measurement
rather than assigned a numerical value.

Figure~\ref{fig:alignment} uses student seeds
10, 11, and 12. At each iteration, the plotted
mean includes only seeds that have not already
terminated; each contributing seed receives
equal weight. Shaded bands show one sample
standard deviation above and below the mean
when at least two seeds remain. Thus, the number
of contributing seeds decreases after the first
target crossing, and changes near the endpoints
can reflect both within-run dynamics and changes
in the set of contributing runs. The reported
minimum alignment and ranges of run-wise maximum
discrepancy are computed from individual runs,
rather than from these averaged curves.

We also record buffer-update alignment at every
update:
\begin{equation}
    C_t:=
    \frac{\langle B_t,O_t\rangle}{\|B_t\|_F}.
    \label{eq:empirical_buffer_alignment}
\end{equation}
The reference $C_t\ge\kappa=2/3$ follows from the
tuned Newton--Schulz map in exact arithmetic,
independently of network width. In contrast,
the current-gradient bound $A_t\ge\kappa/2$
additionally requires the trajectory assumptions
of Lemma~\ref{lem:muon_momentum}.
For the matrix-map comparison, buffers are saved
from student seed 10 at geometrically spaced
updates and at the final update, yielding 59
buffers across the tested momentum values.
These measurements characterize the saved buffers;
they do not constitute independent training runs.

The fixed-step comparison uses
$\eta=\eta_0\approx1.54\times10^{-4}$,
target $10^{-2}$, and horizon $50{,}000$ for
$\mu\in\{0,0.5,0.9,0.95,0.99\}$.
All three student seeds at each momentum reach
the target, with hitting times ranging from
14{,}680 to 25{,}444 updates.
These runs show that the prescribed
$(1-\mu)$ learning-rate scaling can be conservative
for the tested problem; they do not establish
a generally valid momentum-independent step size.

\subsection{Finite Newton--Schulz Maps and the Exact Polar Factor}
\label{app:ns_comparison}

We compare three transformations on identical nonzero
momentum buffers saved from tuned Muon trajectories:
the tuned five-step Newton--Schulz map, five classical
cubic Newton--Schulz iterations, and the exact polar
factor evaluated numerically by SVD. Both iterative
maps start from $X_0=B/\|B\|_F$. The classical
recurrence is
\begin{equation}
    X_{k+1}
    =\frac32X_k-\frac12X_kX_k^\top X_k,
    \qquad k=0,\ldots,4,
    \label{eq:classical_cubic_comparison}
\end{equation}
corresponding to $d=1$ in \eqref{eq:classical_ns}.
Although Proposition~\ref{prop:classical_ns} also
covers higher-degree classical polynomials, the
empirical comparison here uses only the cubic map.
Evaluating each transformation on the same buffers
isolates its effect from differences in training
trajectories.

We compute the SVD using \texttt{numpy.linalg.svd}
and retain singular values satisfying
$s_j>\max(d,m)\epsilon_{\mathrm{float64}}s_{\max}$,
where $\epsilon_{\mathrm{float64}}$ is machine
precision. The numerical polar update is
$U_rV_r^\top$, formed from the retained singular-vector
pairs. Across the 59 saved buffers, the discarded
singular values account for at most
$8\times10^{-15}$ of the nuclear norm.
Thus, truncation has negligible effect on the
alignment comparison at the reported precision.

For each buffer $B$ and transformed update $O$,
we evaluate
\begin{equation}
    a(B,O):=\frac{\langle B,O\rangle}{\|B\|_F},
    \qquad
    s(O):=\|O\|_2,
    \qquad
    q(B,O):=
    \frac{\langle B,O\rangle}
         {\|B\|_*\|O\|_2}.
    \label{eq:map_diagnostics}
\end{equation}
The quantities $a$ and $s$ examine the alignment
and norm bounds used in the training proof.
The ratio $q$ measures alignment per unit spectral
norm relative to the optimum in
Proposition~\ref{prop:polar_optimal}; the exact
polar factor attains $q=1$.
These quantities do not measure distance to the
polar factor.

Figure~\ref{fig:ns_comparison} displays the
measurements on the 59 buffers from student
seed 10. Tuned NS5 has
$a\in[0.697,2.690]$, with median $2.687$, and
$s\in[0.696,1.202]$. Thus, all sampled buffers
satisfy the analytical references $a\ge2/3$
and $s\le5/4$. Its normalized ratio lies in
$q\in[0.748,0.999]$, with median $0.764$ and
standard deviation $0.075$ across buffers.
Classical cubic NS5 has $s=1$ to the reported
precision and $q\in[0.860,0.998]$, with median
$0.860$ and standard deviation $0.043$.
The numerical polar update has $s=1$ and $q=1$
to the reported precision.

The classical cubic map therefore achieves a
higher median alignment-per-norm ratio on these
sampled buffers, while the tuned map retains
the properties sufficient for the training
guarantee. The medians and standard deviations
summarize correlated buffers from the saved
trajectories, not variability across independent
datasets. This comparison establishes neither
a ranking of training performance nor which map
more closely approximates the polar factor.

\subsection{Width, Displacement, and Kernel Diagnostics}
\label{app:width_details}

We vary the student width over
$m\in\{64,128,256,512,1000,2000\}$ while holding
the dataset, teacher, target loss, momentum, and
prescribed learning rate fixed. Each width uses
student seeds 10--14, giving 30 runs in total.
We set $\varepsilon=10^{-2}$, $\mu=0.95$, and
the practical horizon to $1{,}414{,}000$ updates.
For this dataset, $\lambda_0=0.335$ and $R_0=30.2$.
Kernel and displacement diagnostics are recorded
at every iteration through target attainment
or horizon exhaustion.

Alongside $K_t$ and $S_t$ in
\eqref{eq:width_diagnostics}, we measure
\begin{equation}
    P_t:=\|W_t-W_0\|_F,
    \qquad
    R_{\mathbb B}:=
    \frac{18M\sqrt n\,R_0}
         {\kappa\alpha\sqrt{\lambda_0}},
    \qquad
    \widehat P_t:=\frac{P_t}{R_{\mathbb B}}.
    \label{eq:empirical_displacement}
\end{equation}
The radius is $R_{\mathbb B}\approx7.88\times10^3$
and is shared across the width sweep.
For each run, we compute the minimum $K_t$,
maximum $S_t$, and maximum $\widehat P_t$ over
the recorded trajectory. Summary points and
error bars report the mean and one sample
standard deviation of these run-wise extrema
across the five student initializations, with
equal weight assigned to each run.

All 30 runs reach the target loss within the
practical horizon. The minimum normalized kernel
eigenvalue has mean $0.871\pm0.030$ at width 64
and $0.990\pm0.012$ at width 2000. The smallest
value recorded across all runs is $0.826$,
above the analytical reference $1/2$.
The maximum relative kernel change decreases
from $0.143\pm0.047$ at width 64 to
$0.031\pm0.007$ at width 2000
(Figure~\ref{fig:width_stability}).

The largest normalized displacement across all
runs is $1.67\times10^{-3}$, attained at width
2000 (Figure~\ref{fig:appendix_displacement}).
Thus, every observed trajectory remains far
inside the analytical neighborhood while
maintaining kernel positivity. The small
relative displacement and reduced kernel
variation at larger widths are consistent
with the stability mechanism used in the proof.

These observations concern repeated student
initializations on one fixed dataset and teacher.
They do not establish the necessity or tightness
of the sufficient width condition. In particular,
containment within the analytical radius at
these widths does not itself certify the uniform
kernel bounds, and the measured kernel diagnostics
do not verify every Jacobian-action and
linearization estimate used in the analysis.

\end{document}